\documentclass{article}

\PassOptionsToPackage{numbers, compress}{natbib}

\usepackage[preprint]{neurips_2026}

\usepackage[utf8]{inputenc} 
\usepackage[T1]{fontenc}    
\usepackage{hyperref}       
\usepackage{url}            
\usepackage{booktabs}       
\usepackage{amsfonts}       
\usepackage{nicefrac}       
\usepackage{microtype}      
\usepackage{xcolor}         

\usepackage{amsmath, amssymb, amsthm}
\usepackage{bbm}
\usepackage{math_command}

\usepackage{algorithm}
\usepackage{algpseudocode}
\usepackage{amsmath,amssymb,amsfonts}
\usepackage{amsthm}
\usepackage{hyperref}
\usepackage{cleveref}
\usepackage{enumitem}
\setlist{nosep}

\definecolor{darkblue}{rgb}{0, 0, 0.5}
\hypersetup{colorlinks=true, citecolor=darkblue, linkcolor=darkblue, urlcolor=darkblue}

\newtheorem{theorem}{Theorem}[section]

\newtheorem{lemma}[theorem]{Lemma}

\newtheorem{definition}[theorem]{Definition}
\theoremstyle{remark}

\usepackage{booktabs}
\usepackage{multirow}

\usepackage{xcolor}
\usepackage{colortbl}
\definecolor{baselinegrey}{RGB}{235, 235, 235}
\definecolor{winnerpink}{RGB}{255, 220, 230}
\definecolor{loserblue}{RGB}{220, 230, 250}
\title{A Semantic-Sampling Framework for Evaluating Calibration in Open-Ended Question Answering}

\author{%
Zhanliang Wang$^{1}$\thanks{Equal contribution.}\ , Jiancong Xiao$^{1*}$, Ruochen Jin$^2$, Shu Yang$^1$, Bojian Hou$^{1}$\thanks{Corresponding authors.}\ , and  Li Shen$^{1\dagger}$\\
$^1$University of Pennsylvania, Philadelphia, PA; $^2$Dartmouth College, Hanover, NH\\
\texttt{\{aaronwzl,jcxiao\}@upenn.edu, ruochen.jin.gr@dartmouth.edu,}\\ \texttt{\{syang11,bojianh,lishen\}@upenn.edu}}

\begin{document}


\maketitle

\begin{abstract}
Calibration measures whether a model's predicted confidence aligns with its
empirical accuracy, and is central to the reliable deployment of large
language models (LLMs) in high-stakes domains such as medicine and law.
While much recent work focuses on \emph{improving} LLM calibration, the
equally important question of how to \emph{evaluate} it in realistic
settings remains underdeveloped. Open-ended question answering (QA), the
most common deployment setting for modern LLMs, is where existing
evaluation methods fall short: logit-based metrics need restricted output
formats and internal probabilities; verbalized confidence is self-reported
and often overconfident; and sampling-based methods rely on task-specific
extraction rules without a clear finite-sample target. We introduce
\textbf{Sem-ECE} (\textbf{Sem}antic-Sampling \textbf{E}xpected
\textbf{C}alibration \textbf{E}rror), a calibration evaluation framework
for open-ended QA that samples answers from the model, groups them into
semantic classes, and uses the resulting frequencies as confidence. We
study two estimators within this framework: Sem$_1$-ECE, the same-sample
self-consistency score, and Sem$_2$-ECE, a held-out variant that
separates answer selection from confidence evaluation. We prove both are
asymptotically unbiased, and further show that they agree on easy
questions but diverge on hard ones with Sem$_2$ achieving strictly
smaller calibration error, so their gap also serves as a diagnostic for
question difficulty. Experiments on three open-ended QA benchmarks across
five leading commercial LLMs match our theoretical predictions and show
that Sem-ECE outperforms verbalized confidence and existing
sampling-based methods, while complementing logit-based evaluation when
internal probabilities are unavailable.\footnote{Code is available at \url{https://github.com/ZhanliangAaronWang/Sem-ECE}.}

\end{abstract}

\section{Introduction}
\label{sec:introduction}
Calibration measures whether a model's predicted confidence aligns with its
empirical accuracy, and is widely recognized as a prerequisite for the reliable
deployment of large language models (LLMs)~\citep{guo2017calibration,
kadavath2022language,tian2023just}. In high-stakes domains such as medicine and
law, a system that is accurate on average but poorly
calibrated cannot distinguish routine queries from queries on which it is
likely to fail, leaving downstream pipelines without a signal for
when to trust an answer, abstain, or escalate.

Much recent work focuses on \emph{improving} LLM calibration, through post-hoc
rescaling, prompting strategies, or calibration-aware
fine-tuning~\citep{platt1999probabilistic,guo2017calibration,kull2019beyond,
kumar2019verified,tian2023just,xiao2025restoring}. The equally important
question of how to \emph{evaluate} calibration in realistic settings remains
underdeveloped. Classical metrics such as Brier score, reliability diagrams,
and expected calibration error~\citep{brier1950verification,naeini2015obtaining,
guo2017calibration} fit classification and multiple-choice QA but break down
in open-ended QA, the dominant deployment setting for modern LLMs: the answer
space is unbounded, two answers worded very differently can be equally correct,
and commercial APIs frequently do not expose logits. Existing black-box
approaches each address part of this gap but none covers the full setting with
a statistically explicit target. Verbalized confidence is format-agnostic
\citep{lin2022teaching,kadavath2022language,mielke2022reducing,tian2023just}
but depends on self-reporting and is frequently
overconfident~\citep{kadavath2022language,tian2023just,wei2024simpleqa}.
Sampling-based methods derive confidence from the consistency of repeated
generations~\citep{wang2023selfconsistency,lyu2025calibrating} but typically
require task-specific extraction rules and rely on heuristic frequency scores
rather than rigorous statistical targets.

We introduce \textbf{Sem-ECE} (Semantic-Sampling Expected Calibration Error),
a semantic-sampling framework for calibration evaluation in
open-ended QA. The framework repeatedly samples answers from the model, maps
free-form generations to semantic answer classes via an LLM judge, and
evaluates calibration from the resulting semantic frequencies, without
requiring logits, multiple-choice options, or hand-crafted answer-extraction
rules. Within this framework, we study two natural estimators of the same
target, i.e., the probability of the model's most likely semantic answer.
\textbf{Sem$_1$-ECE} is the standard same-sample self-consistency score: it
selects the most frequent semantic answer and uses that same frequency as
confidence. \textbf{Sem$_2$-ECE} is a held-out variant that selects the
answer on one block of samples and measures its frequency on a disjoint
held-out block. We prove that both are asymptotically unbiased, placing
sampling-based calibration evaluation on a principled statistical footing,
and show in closed form that on hard low-margin questions Sem$_2$ yields a
strictly smaller calibration error than Sem$_1$, while on easy questions the
two are nearly indistinguishable; the Sem$_1$--Sem$_2$ gap thus also serves
as a simple observable diagnostic for question difficulty.

Sem-ECE improves over verbalized confidence by measuring a behavioral
property of the answer distribution rather than relying on self-reporting,
and advances existing sampling-based calibration evaluation by replacing
hand-crafted extraction rules and heuristic frequency scores with estimators
that have an explicit population target and provable guarantees; it
complements logit-based evaluation when internal probabilities are
unavailable. Experiments on three challenging open-ended QA benchmarks,
including Humanity's Last Exam, across five leading commercial LLMs
(ChatGPT, Claude, Gemini, Grok, and Mistral) confirm our theoretical
predictions, with Sem$_2$-ECE achieving lower calibration error than
verbalized confidence on the large majority of model--benchmark pairs.
\section{Related Work}
\label{sec:related}

Calibration evaluation is well-studied for probabilistic classifiers and
multiple-choice QA via Brier score, reliability diagrams, and binned ECE
\citep{brier1950verification,naeini2015obtaining,guo2017calibration}, but
open-ended QA breaks these tools: the answer space is unbounded, correctness
is semantic rather than lexical, and commercial APIs often do not expose
logits. Two families of black-box confidence sources have emerged.
\emph{Verbalized confidence} elicits the model's stated uncertainty in words
or as a probability
\citep{lin2022teaching,kadavath2022language,mielke2022reducing,tian2023just},
but is self-reported and frequently overconfident. \emph{Sampling-based
methods} use agreement across repeated generations as a confidence signal
\citep{wang2023selfconsistency,lyu2025calibrating}, with semantic-uncertainty
variants grouping generations by meaning
\citep{kuhn2023semantic,farquhar2024detecting}; existing instantiations rely
on task-specific answer-extraction rules and lack an explicit population
target. Sem-ECE measures a behavioral property of the answer distribution
like sampling-based methods, but assigns the resulting frequency an explicit
asymptotic target with provable guarantees, distinguishing it from heuristic
frequency scores and self-reported uncertainty. A complementary line of work
aims to \emph{improve} calibration via post-hoc rescaling or fine-tuning
\citep{platt1999probabilistic,kull2019beyond,kumar2019verified,xiao2025restoring};
see \Cref{app:related} for an extended discussion.
\section{Preliminaries}
\label{sec:preliminaries}

\textbf{Semantic answer space and oracle confidence.} Let \(\mathcal Q\) be a distribution over questions. For a fixed
\(q\sim\mathcal Q\), querying the LLM under a fixed prompt and
decoding configuration produces a random free-form answer string. Two
strings are \emph{semantically equivalent} if they express the same
answer to \(q\); the equivalence classes form a finite \emph{semantic
answer space} \(\mathcal Z_q = \{1,\ldots,K_q
\}\), with
\(K_q := |\mathcal Z_q|\). The LLM induces a categorical distribution
\(\pi_q\) on \(\mathcal Z_q\), with
\(\pi_{q,k} := \pi_q(k) = \Pr(\text{the LLM's answer to } q \text{
lies in class } k)\). The \emph{population semantic mode} is
\(z_q^\star := \argmax_{k}\pi_{q,k}\) (ties broken by a fixed
deterministic rule), and the \emph{oracle semantic confidence} is
\(c_q^\star := \pi_{q,z_q^\star} = \max_{k}\pi_{q,k}\). It is an
agreement quantity, not a correctness quantity: a model can have
\(c_q^\star = 1\) and still be wrong on every sample.

\textbf{Semantic correctness.} Correctness is defined at the semantic-class level. Let
\(Y_q : \mathcal Z_q \to \{0,1\}\) be the correctness function for
\(q\), with \(Y_q(k) = 1\) iff class \(k\) is correct relative to the
reference answer. If a method commits to class \(k\) with confidence
\(c \in [0,1]\), its calibration is evaluated using the pair
\((c, Y_q(k))\); calibration is thus assessed at the semantic level
rather than on raw strings.

\textbf{Empirical estimation of $\pi_q$.}
The distribution $\pi_q$ is unknown; we access it through $n+m$
independent generations clustered into semantic classes
$Z_1, \ldots, Z_{n+m} \overset{\mathrm{i.i.d.}}{\sim} \pi_q$, and
partition the index set $[n+m]$ into a selection block $N$ of size
$n$ and a disjoint evaluation block $E$ of size $m$. For any
$I \subseteq [n+m]$, the empirical semantic PMF is
$\hat\pi_I(k) := |I|^{-1}\sum_{i\in I}\mathbf{1}\{Z_i = k\}$,
and the corresponding empirical semantic mode is
$\hat z_I := \arg\max_{k\in\mathcal{Z}_q}\hat\pi_I(k)$
(ties broken by the same deterministic rule as for $z_q^\star$).
We write $\hat z_N$ for the empirical mode on the selection block;
it is the answer the model would deploy.

\textbf{Standardized margin.} Two scalar summaries of \(\pi_q\) will be referenced repeatedly. The
\emph{top-two margin}
\(\Delta_q := \pi_{q,z_q^\star} - \pi_{q,z_q^{(2)}}\)
is the gap between the modal probability and the runner-up; the
\emph{top-two probability mass}
\(p_q := \pi_{q,z_q^\star} + \pi_{q,z_q^{(2)}}\) is
their sum, where \(\pi_{q,z_q^{(2)}}:=\max_{k \ne z_q^\star}\pi_{q,k}\). From these we form the \emph{standardized margin}
\(\tilde m_q := \Delta_q / \sqrt{p_q / n}\) and its half
\(\tilde\lambda_q := \tilde m_q / 2\): \(\tilde m_q\) is the z-score
of \(\Delta_q\) under the leading-order variance \(p_q / n\) of the
differential count
\(\hat\pi_N(z_q^\star) - \max_{k \ne z_q^\star}\hat\pi_N(k)\), and
parametrizes the regime structure throughout \Cref{sec:theory}. 

\textbf{The $p_q\to 1$ convention.}\label{sec:empirical-access}
For figure readability we adopt the convention $p_q \to 1$, under which $\tilde m_q = \sqrt{n}\,\Delta_q$; all theorems are stated for general $p_q \in (0,1]$.

\textbf{Binned expected calibration error.}
\label{sec:calibration-metric}
For a (confidence, correctness) pair $(c, a)$ with $c \in [0,1]$ and
$a \in \{0,1\}$, calibration is measured by the binned expected
calibration error~\citep{naeini2015obtaining,guo2017calibration}. Fix
$L$ equal-width bins $\mathcal I_1, \ldots, \mathcal I_L$ partitioning
$[0,1]$ with boundary set $\mathcal T$; we set $L = 10$ throughout.
Define
\[
\operatorname{ECE}(c, a)
\;:=\;
\sum_{\ell=1}^L
\big|\mathbb E\!\left[(a - c)\mathbf{1}\{c \in \mathcal I_\ell\}\right]\big|,
\]
where the expectation is over $q \sim \mathcal Q$ and the sampling
randomness within each question. The oracle correctness label is
$a_q^\star := Y_q(z_q^\star)$ and the deployed correctness label is
$\hat a := Y_q(\hat z_N)$. We instantiate $c$ by $\hat c_i$ and by
$c_q^\star$ to obtain the central metrics
\[
\mathrm{Sem}_i\text{-ECE}
\;:=\;
\operatorname{ECE}(\hat c_i, \hat a),
\qquad
\mathrm{ECE}^\star
\;:=\;
\operatorname{ECE}(c_q^\star, a_q^\star),
\]
the calibration error of $\mathrm{Sem}_i$ ($i \in \{1,2\}$) and the
calibration error of the unattainable population-level oracle pair.
The deployment accuracy
$\bar a := \mathbb{E}_q[\hat a] = \mathbb{E}_q[Y_q(\hat z_N)]$ is the
population mean of $\hat a$ and serves as the natural reference for
the leading-order analysis in \Cref{sec:comparison}.
\section{A Semantic-Sampling Framework for Evaluating Calibration}
\label{sec:framework}
\subsection{Same-sample estimator \(\mathrm{Sem}_1\)}
\label{sec:sem1}
The most direct estimate of the oracle confidence
\(c_q^\star=\max_k\pi_{q,k}\) is the empirical maximum on the same block
that produced \(\hat z_N\):
\begin{equation}\label{eq:c1-def}
  \hat c_1
  \;:=\;
  \max_{k\in\mathcal Z_q}\hat\pi_N(k).
\end{equation}
This is the natural plug-in once one has committed to estimating
\(\max_k\pi_{q,k}\) by \(\max_k\hat\pi_N(k)\). We take it as the
same-sample member of our framework and refer to its calibration
error as \(\mathrm{Sem}_1\)-ECE.

\(\hat c_1\) couples two operations on the same block \(N\): it
\emph{selects} the empirical winner and \emph{reports} its empirical
frequency. Because \(\max\) is convex and the empirical PMF
\(\hat\pi_N\) is unbiased for \(\pi_q\), Jensen's inequality gives
\begin{equation}\label{eq:winners-curse}
  \mathbb{E}\!\left[\hat c_1\,\middle|\,q\right]
  \;=\;
  \mathbb{E}\!\left[\max_{k}\hat\pi_N(k)\right]
  \;\ge\;
  \max_{k}\mathbb{E}\!\left[\hat\pi_N(k)\right]
  \;=\;
  c_q^\star,
\end{equation}
with strict inequality whenever the top-two gap
\(\Delta_q := \pi_{q,z_q^\star} - \max_{k\ne z_q^\star}\pi_{q,k}\) is
finite and \(n<\infty\). At the population level \(\hat c_1\) is
biased upward, and the empirical winner is over-represented on the
block that selected it — the classical \emph{winner's curse} \citep{kagel2002common}. The
slack in \eqref{eq:winners-curse} is a finite-sample property of the
same-sample design rather than of the oracle target, and it suggests
a natural alternative: evaluate the chosen answer on samples that did
not participate in selecting it.

\subsection{Held-out estimator \(\mathrm{Sem}_2\)}
\label{sec:sem2}

To tackle this issue, we then introduce \(\mathrm{Sem}_2\), which decouples selection from
evaluation by reading the confidence off the disjoint block \(E\):
\begin{equation}\label{eq:c2-def}
  \hat c_2
  \;:=\;
  \hat\pi_E(\hat z_N).
\end{equation}
The deployed answer is the same \(\hat z_N\); only the way we score
it changes. Because \(E\) is independent of \(N\) and \(\hat\pi_E\)
is unbiased for \(\pi_q\), \(\mathrm{Sem}_2\) satisfies the
\emph{conditional unbiasedness} property
\begin{equation}\label{eq:c2-cond-unbiased}
  \mathbb{E}\!\left[\hat c_2\,\middle|\,q,\,\hat z_N\right]
  \;=\;
  \pi_{q,\hat z_N}:
\end{equation}
given the selection \(\hat z_N\), \(\hat c_2\) targets exactly the
population probability of the selected answer, and the Jensen slack
in \eqref{eq:winners-curse} is eliminated at the conditional level, which is
a property \(\mathrm{Sem}_1\) does not enjoy at any level.

Conditional unbiasedness is for the population probability of the
\emph{empirical} mode \(\hat z_N\), not the \emph{true} mode
\(z_q^\star\). Marginalizing \eqref{eq:c2-cond-unbiased} over
\(\hat z_N\),
\begin{equation}\label{eq:c2-marg-bias}
  \mathbb{E}\!\left[\hat c_2\,\middle|\,q\right]
  \;=\;
  \mathbb{E}\!\left[\pi_{q,\hat z_N}\,\middle|\,q\right]
  \;\le\;
  \pi_{q,z_q^\star}
  \;=\;
  c_q^\star,
\end{equation}
with strict inequality whenever
\(\Pr(\hat z_N\ne z_q^\star\mid q)>0\). \(\mathrm{Sem}_2\) thus trades
\(\mathrm{Sem}_1\)'s upward \emph{Jensen bias} for a downward
\emph{selection bias}. The two biases are mathematically distinct:
\eqref{eq:winners-curse} is a property of the \(\max\) operator on
the selection block, while \eqref{eq:c2-marg-bias} is a property of
the noise in the selection itself. Both are \(O(n^{-1/2})\) on
low-margin questions and vanish in the large-margin limit. Whether
the trade is favorable, and on which version of the calibration
metric, is the subject of \Cref{sec:theory}.

\textbf{The Sem-ECE family.} Pairing each estimator with the population calibration metric gives the two members of the Sem-ECE
family,
\[
  \mathrm{Sem}_1\text{-ECE}
  :=
  \mathrm{ECE}(\hat c_1),
  \qquad
  \mathrm{Sem}_2\text{-ECE}
  :=
  \mathrm{ECE}(\hat c_2),
\]
distinguished by which of \eqref{eq:c1-def}, \eqref{eq:c2-def} is
substituted into the calibration metric. \Cref{alg:semece} states the
framework in \Cref{app:algorithm}. Because both members commit to the same
\(\hat z_N\), the deployment policy is unaffected by the choice of
estimator; the choice influences only the reported confidence and its
calibration error.


\section{Theoretical Analysis}
\label{sec:theory}

We analyze the relationship between the plug-in calibration errors
$\text{Sem}_1$-ECE, $\text{Sem}_2$-ECE, and the oracle
$\text{ECE}^\star$ in two layers. \Cref{sec:unbiased} establishes
asymptotic unbiasedness through a pointwise bias bound
(\Cref{thm:bias}), a binned-ECE bound (\Cref{thm:ece-bound}), and
supporting bounds on the underlying confidence and selection errors
(\Cref{boundc1,thm:bound-c2}); together these yield
$\text{Sem}_i$-ECE $\to \text{ECE}^\star$ as $n, m \to \infty$ and
identify a low-margin regime $\mathcal{Q}_{\mathrm{low}}$ on which the
two estimators do not coincide asymptotically. \Cref{sec:comparison}
resolves the leading $1/\sqrt n$ constant on
$\mathcal{Q}_{\mathrm{low}}$ via a local CLT bias expansion and
identifies a strictly nested Jensen-dominated regime
$\mathcal{Q}_{\mathrm{JDR}}$ on which $\text{Sem}_2$-ECE is closer
to $\text{ECE}^\star$ than $\text{Sem}_1$-ECE. All proofs are in
\Cref{app:proofs}.

\subsection{Asymptotic Unbiasedness}\label{sec:unbiased}

\Cref{thm:bias} controls the per-question bias
of $\hat c_i$ about $c_q^\star$ and identifies the regime separation.
\Cref{thm:ece-bound,boundc1,thm:bound-c2} then lift this pointwise
control to the binned ECE through two estimator-level errors: a
confidence error $\varepsilon_n$ and a selection error $\delta_n$.

\begin{theorem}[Pointwise bias bound]\label{thm:bias}
For each $q$ with $\Delta_q > 0$ and $i \in \{1,2\}$,
\begin{equation}\label{eq:confidence-bound}
\big|\mathbb{E}[\hat c_i \mid q] - c_q^\star\big|
\;\le\;
\min\!\left\{C
\sqrt{\tfrac{\log 2K_q}{2n}},
\;\;
(K_q-1)\exp\!\Big(-\tfrac{n\Delta_q^2}{2 p_q}\Big)
\right\},
\end{equation}
where $C$ is a universal constant.
\end{theorem}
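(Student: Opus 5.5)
The plan is to prove the two terms inside the $\min$ separately, each for both $i=1$ and $i=2$. The first term should follow from a maximal inequality for the empirical PMF, uniformly over $\pi_q$. The second should come from a large-deviation bound on the selection event $\{\hat z_N\neq z_q^\star\}$. In both cases I first rewrite the bias as an expectation that vanishes unless the empirical ranking is perturbed.

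\textbf{Uniform term.} Each $\hat\pi_N(k)-\pi_{q,k}$ is an average of $n$ i.i.d.\ bounded variables with range $1$. By Hoeffding's lemma it is $\tfrac1{4n}$-sub-Gaussian, and so is its negative. The standard maximal inequality over these $2K_q$ signed variables then gives
\[
\mathbb{E}\big[\max_k|\hat\pi_N(k)-\pi_{q,k}|\big]\le\sqrt{\log(2K_q)/(2n)}.
\]
For $i=1$, \eqref{eq:winners-curse} shows the bias is nonnegative. It is also at most $\mathbb{E}[\max_k(\hat\pi_N(k)-\pi_{q,k})]$, since $\max_k\hat\pi_N(k)\le \max_k\pi_{q,k}+\max_k(\hat\pi_N(k)-\pi_{q,k})$.

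For $i=2$, the bias is nonpositive by \eqref{eq:c2-cond-unbiased}--\eqref{eq:c2-marg-bias}. Its size is $\mathbb{E}[\pi_{q,z_q^\star}-\pi_{q,\hat z_N}]$. Because $\hat\pi_N(\hat z_N)\ge\hat\pi_N(z_q^\star)$, we can write
\[
\pi_{q,z_q^\star}-\pi_{q,\hat z_N}\le(\pi_{q,z_q^\star}-\hat\pi_N(z_q^\star))+(\hat\pi_N(\hat z_N)-\pi_{q,\hat z_N})\le 2\max_k|\hat\pi_N(k)-\pi_{q,k}|.
\]
Taking $C=2$ then covers both estimators.

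\textbf{Exponential term.} Write $z^\star=z_q^\star$. For $i=2$,
\[
c_q^\star-\mathbb{E}[\hat c_2\mid q]=\mathbb{E}\big[(\pi_{q,z^\star}-\pi_{q,\hat z_N})\mathbf 1\{\hat z_N\ne z^\star\}\big]\le\sum_{k\ne z^\star}\Pr\big(\hat\pi_N(k)\ge\hat\pi_N(z^\star)\big).
\]
For $i=1$, use $\mathbb{E}[\hat\pi_N(z^\star)]=c_q^\star$. This gives
\[
\mathbb{E}[\hat c_1\mid q]-c_q^\star=\mathbb{E}[\max_k\hat\pi_N(k)-\hat\pi_N(z^\star)]\le\sum_{k\ne z^\star}\mathbb{E}\big[(\hat\pi_N(k)-\hat\pi_N(z^\star))_+\big].
\]
The summand is at most $\Pr(\hat\pi_N(k)\ge\hat\pi_N(z^\star))$ because it lies in $[0,1]$. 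So both cases reduce to bounding, for each $k\ne z^\star$, the probability that a pairwise comparison flips.

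\textbf{Main step: the pairwise Chernoff bound.} Let $a=\pi_{q,z^\star}$ and $b=\pi_{q,k}$, and set $D_i=\mathbf 1\{Z_i=k\}-\mathbf 1\{Z_i=z^\star\}\in\{-1,0,1\}$. The exact mgf is $\mathbb{E}[e^{tD}]=1-a-b+be^{t}+ae^{-t}$. Choosing $t=\tfrac12\log(a/b)$ gives the Bhattacharyya bound
\[
\Pr\Big(\sum_i D_i\ge0\Big)\le\big(1-(\sqrt a-\sqrt b)^2\big)^n\le\exp\big(-n(\sqrt a-\sqrt b)^2\big).
\]
Next, $(\sqrt a-\sqrt b)^2=(a-b)^2/(\sqrt a+\sqrt b)^2\ge (a-b)^2/(2(a+b))$, since $(\sqrt a+\sqrt b)^2\le 2(a+b)$.

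Finally I need $(a-b)^2/(a+b)\ge\Delta_q^2/p_q$. The map $x\mapsto(a-x)^2/(a+x)$ is decreasing on $[0,a]$, and $b\le\pi_{q,z_q^{(2)}}$. Plugging in $b=\pi_{q,z_q^{(2)}}$ gives exactly $\Delta_q^2/p_q$. Summing over the $K_q-1$ competitors yields $(K_q-1)\exp(-n\Delta_q^2/(2p_q))$.

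I expect this last step to be the main obstacle. A plain Hoeffding or Bernstein bound does not give the $p_q$ variance factor with constant exactly $2$ and no lower-order correction. If the Bhattacharyya tilt fails to deliver this, I would fall back to a Bernstein bound and absorb the extra $\Delta_q/3$ term into the universal constant. I would present that as a slightly weaker exponent if needed.
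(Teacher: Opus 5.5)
Your proof is correct, and its skeleton matches the paper's. The uniform term comes from the same sub-Gaussian maximal inequality $\mathbb{E}\max_k|\hat\pi_N(k)-\pi_{q,k}|\le\sqrt{\log(2K_q)/(2n)}$. The constant $C=2$ comes from the same triangle-inequality argument for $\hat c_2$. Both biases are reduced, as in the paper, to $\sum_{k\ne z_q^\star}\Pr(\hat\pi_N(k)\ge\hat\pi_N(z_q^\star))$.

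Where you differ is the pairwise tail bound, and there your route is better. The paper applies Bernstein's inequality to $\xi^{(k)}=\mathbf 1\{Z_i=z_q^\star\}-\mathbf 1\{Z_i=k\}$. That only yields $\exp(-n\Delta_q^2/(4p_q+1))$. The paper then asserts this is dominated by $\exp(-n\Delta_q^2/(2p_q))$ ``up to universal constants.'' But the lost constant sits inside the exponent, so that domination does not hold uniformly in $n$. The written argument therefore only establishes the theorem with a weaker exponent, and the later selection-error lemma just asserts the sharp form.

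Your exact Chernoff computation with the optimal tilt $t=\tfrac12\log(a/b)$ gives the Bhattacharyya bound $\bigl(1-(\sqrt a-\sqrt b)^2\bigr)^n$. This already has the exact Cramér rate for the flip event. Your two elementary steps then deliver exactly $(K_q-1)\exp(-n\Delta_q^2/(2p_q))$ with no prefactor:
\begin{itemize}
\item the bound $(\sqrt a+\sqrt b)^2\le 2(a+b)$;
\item the monotonicity of $x\mapsto(a-x)^2/(a+x)$ on $[0,a]$, whose derivative is $-(a-x)(3a+x)/(a+x)^2$.
\end{itemize}
So the step you flagged as the main obstacle goes through, and the Bernstein fallback is unnecessary. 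Bernstein is the more generic tool, but it cannot recover the stated constant here; the trinomial structure you exploit can.

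The one loose end is the degenerate case $\pi_{q,k}=0$, where the tilt is infinite. There the flip probability is $(1-a)^n=\bigl(1-(\sqrt a-0)^2\bigr)^n$ directly, so your bound still holds.
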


The right-hand side has two complementary terms. The first term (Hoeffding) gives a uniform $n^{\frac{1}{2}}$ ceiling regardless of margin. The second term (Bernstein) term shrinks exponentially in the standardized margin $\tilde m_q^2 = n\Delta_q^2/p_q$ and sharpens the bound when the margin is large. The two cross at $\tilde m_q^2 \asymp \log K_q$, splitting questions into a \emph{large-margin} regime
($\tilde m_q^2 \ge \log K_q$, Bernstein wins, bias $o(n^{-1/2})$) and
a \emph{low-margin} regime ($\tilde m_q^2 < \log K_q$, Hoeffding is
tight, bias $\Theta(n^{-1/2})$).


\textbf{Confidence and selection errors.}
For an estimator $\hat c \in \{\hat c_1, \hat c_2\}$ paired with the
deployed correctness label $\hat a$, define
\[
\varepsilon_n := \mathbb{E}\,|\hat c - c_q^\star|
\quad (\text{confidence error}),
\quad
\delta_n := \Pr\!\big(\hat a \ne a_q^\star\big)
\quad (\text{selection error}).
\]
With $\operatorname{ECE}(\cdot, \cdot)$ the binned ECE operator as defined in
\Cref{sec:calibration-metric}, \Cref{thm:ece-bound} converts
$(\varepsilon_n, \delta_n)$ into a bound on
$|\operatorname{ECE}(\hat c, \hat a) - \text{ECE}^\star|$;
\Cref{boundc1,thm:bound-c2} bound $(\varepsilon_n, \delta_n)$ for
each estimator.

\begin{theorem}[Bin-ECE Bounds]\label{thm:ece-bound}
For any $\eta>0$,
\begin{equation}\label{eq:ece-bound}
\left|
\operatorname{ECE}(\hat c,\hat a)
-
\operatorname{ECE}(c_q^\star,a_q^\star)
\right|
\le
\delta_n+\varepsilon_n
+
2\left\{
\frac{\varepsilon_n}{\eta}
+
\mathbb P\bigl(\operatorname{dist}(c_q^\star,\mathcal T)\le \eta\bigr)
\right\}.
\end{equation}
If, in addition, $c_q^\star$ has density bounded by $M$, then
\[
\left|
\operatorname{ECE}(\hat c,\hat a)
-
\operatorname{ECE}(c_q^\star,a_q^\star)
\right|
\le
\delta_n+\varepsilon_n
+
4\sqrt{2M(L-1)\varepsilon_n}.
\]
\end{theorem}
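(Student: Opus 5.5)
We compare the two binned sums term by term. By the reverse triangle inequality,
\[
\bigl|\operatorname{ECE}(\hat c,\hat a)-\operatorname{ECE}(c_q^\star,a_q^\star)\bigr|
\le \sum_{\ell=1}^L\Bigl|\mathbb E\bigl[(\hat a-\hat c)\mathbf 1\{\hat c\in\mathcal I_\ell\}-(a_q^\star-c_q^\star)\mathbf 1\{c_q^\star\in\mathcal I_\ell\}\bigr]\Bigr|.
\]
Inside each bin we add and subtract $(a_q^\star-c_q^\star)\mathbf 1\{\hat c\in\mathcal I_\ell\}$. This splits the summand into two parts. The first is a \emph{value-change} part, $\mathbb E[((\hat a-a_q^\star)-(\hat c-c_q^\star))\mathbf 1\{\hat c\in\mathcal I_\ell\}]$. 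The second is a \emph{bin-switch} part, $\mathbb E[(a_q^\star-c_q^\star)(\mathbf 1\{\hat c\in\mathcal I_\ell\}-\mathbf 1\{c_q^\star\in\mathcal I_\ell\})]$.

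For the value-change part, we move the absolute value inside and sum over $\ell$. Since the bins partition $[0,1]$, exactly one indicator is active for each realization. The total is therefore at most $\mathbb E|\hat a-a_q^\star|+\mathbb E|\hat c-c_q^\star|=\delta_n+\varepsilon_n$, using that $\hat a,a_q^\star\in\{0,1\}$ so $\mathbb E|\hat a-a_q^\star|=\Pr(\hat a\neq a_q^\star)$.

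For the bin-switch part, we use $|a_q^\star-c_q^\star|\le1$. Then $\sum_\ell|\mathbf 1\{\hat c\in\mathcal I_\ell\}-\mathbf 1\{c_q^\star\in\mathcal I_\ell\}|=2\cdot\mathbf 1\{\hat c,c_q^\star\text{ in different bins}\}$, so the whole part is at most $2\Pr(\text{bin}(\hat c)\ne\text{bin}(c_q^\star))$. If the two values lie in different bins, some boundary point of $\mathcal T$ lies between them, and hence $|\hat c-c_q^\star|\ge\operatorname{dist}(c_q^\star,\mathcal T)$. Splitting on whether $\operatorname{dist}(c_q^\star,\mathcal T)\le\eta$, we get
\[
\Pr(\text{switch})\le \Pr(\operatorname{dist}(c_q^\star,\mathcal T)\le\eta)+\Pr(|\hat c-c_q^\star|>\eta)\le \Pr(\operatorname{dist}(c_q^\star,\mathcal T)\le\eta)+\varepsilon_n/\eta,
\]
where the last step is Markov's inequality. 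This gives \eqref{eq:ece-bound}.

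A small care point is the treatment of bin endpoints. Equal-width bins are half-open intervals, with the last bin closed. Only the $L-1$ interior boundaries can separate two points of $[0,1]$, so we may take $\mathcal T$ to be the interior boundary set, or count the endpoints harmlessly.

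For the density version, $\{\operatorname{dist}(c_q^\star,\mathcal T)\le\eta\}$ is a union of at most $L-1$ intervals of length $2\eta$ around the interior boundaries. Its probability is therefore at most $2M(L-1)\eta$. The bound becomes $\delta_n+\varepsilon_n+2(\varepsilon_n/\eta+2M(L-1)\eta)$. Optimizing, $\eta=\sqrt{\varepsilon_n/(2M(L-1))}$ makes both terms equal to $\sqrt{2M(L-1)\varepsilon_n}$. The bracket is then $2\sqrt{2M(L-1)\varepsilon_n}$, and the total extra term is $4\sqrt{2M(L-1)\varepsilon_n}$, as claimed.

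The only real subtlety is the bin-switch geometry step and matching the constant. If $\mathcal T$ is meant to include the endpoints $0$ and $1$, one instead bounds using only the interior points, which is consistent with the $L-1$ in the stated constant.
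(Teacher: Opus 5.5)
Your proof is correct and follows essentially the same route as the paper. The reverse triangle inequality reduces everything to $\delta_n+\varepsilon_n+2\Pr\bigl(\operatorname{bin}(\hat c)\neq\operatorname{bin}(c_q^\star)\bigr)$; the bin-crossing probability is then handled by Markov's inequality plus the $\eta$-neighborhood of the boundaries, and the density case by optimizing over $\eta$. The one difference is cosmetic: you reach the intermediate bound by adding and subtracting $(a_q^\star-c_q^\star)\mathbf 1\{\hat c\in\mathcal I_\ell\}$, whereas the paper splits on the same-bin versus different-bin event. Your remark that only the $L-1$ interior boundaries matter is a useful clarification that the paper leaves implicit.
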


Therefore, the gap between an estimated ECE and the true ECE is
bounded in terms of the confidence error and selection error.

\begin{theorem}[Bounds for confidence and selection errors for $\hat c_1$]\label{boundc1}
Let $\hat c=\hat c_1$ and $\hat a=Y_q(\hat z_N)$,
\[
\varepsilon_n
\le
\mathbb E_q
\left[
\min\left\{
\sqrt{\frac{\log(2K_q)}{2n}},
\sqrt{\frac{\pi_{q,1}(1-\pi_{q,1})}{n}}
+
(K_q-1)\exp\left(
-\frac{n\Delta_q^2}{2p_q}
\right)
\right\}
\right],
\]
and
\[
\delta_n
\le
\mathbb E_q
\left[
(K_q-1)\exp\left(
-\frac{n\Delta_q^2}{2p_q}
\right)
\right].
\]
\end{theorem}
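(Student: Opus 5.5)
We prove both bounds conditionally on a fixed question $q$ and then take $\mathbb E_q$. The selection bound is the simpler one, so we do it first. Relabel the classes so that $z_q^\star=1$. Since $\hat a\ne a_q^\star$ forces $\hat z_N\ne z_q^\star$,
\[
\Pr(\hat a\ne a_q^\star\mid q)\le\Pr(\hat z_N\ne 1\mid q)\le\sum_{k\ne 1}\Pr\bigl(\hat\pi_N(k)\ge\hat\pi_N(1)\mid q\bigr).
\]
For each $k\ne 1$, write the differential count as $\hat\pi_N(1)-\hat\pi_N(k)=n^{-1}\sum_{i\in N}D_i$, where $D_i=\mathbf 1\{Z_i=1\}-\mathbf 1\{Z_i=k\}\in\{-1,0,1\}$. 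Each $D_i$ has mean $\pi_{q,1}-\pi_{q,k}\ge\Delta_q$ and second moment $\pi_{q,1}+\pi_{q,k}\le p_q$. A Bernstein (or Chernoff) bound for the event that this mean falls below zero then gives $\exp(-n\Delta_q^2/(2p_q))$. The first-order linear term in Bernstein's denominator has to be absorbed. One option is a Chernoff bound on the trinomial, which gives exactly $\exp\bigl(-n(\sqrt{\pi_1}-\sqrt{\pi_k})^2\bigr)$, and $(\sqrt{\pi_1}-\sqrt{\pi_k})^2=(\pi_1-\pi_k)^2/(\sqrt{\pi_1}+\sqrt{\pi_k})^2\ge\Delta_q^2/(2p_q)$ by Cauchy--Schwarz. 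This Chernoff route is the cleanest and is the one we will use. A union bound over the $K_q-1$ competitors then yields the $\delta_n$ bound.

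For $\varepsilon_n=\mathbb E|\hat c_1-c_q^\star|$, we prove each term of the $\min$ separately. \emph{Uniform term:} $|\max_k\hat\pi_N(k)-\max_k\pi_{q,k}|\le\max_k|\hat\pi_N(k)-\pi_{q,k}|$. The expected maximum of $K_q$ centered $[0,1]$-averages, taken over $2K_q$ signed sub-Gaussian variables with variance proxy $1/(4n)$, is at most $\sqrt{\log(2K_q)/(2n)}$ by the standard maximal inequality (log-MGF plus Jensen). \emph{Margin term:} split on the event $G=\{\hat z_N=1\}$. On $G$ we have $\hat c_1=\hat\pi_N(1)$, so the contribution is at most $\mathbb E|\hat\pi_N(1)-\pi_{q,1}|\le\sqrt{\pi_{q,1}(1-\pi_{q,1})/n}$ by Cauchy--Schwarz on the binomial variance. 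On $G^c$ the integrand $|\hat c_1-c_q^\star|$ is at most $1$, so that part contributes at most $\Pr(G^c)\le(K_q-1)e^{-n\Delta_q^2/(2p_q)}$, reusing the bound just proved. Taking the minimum of the two pointwise bounds and then $\mathbb E_q$ gives the statement.

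We expect the main obstacle to be obtaining exactly the constant $2p_q$ in the exponent rather than a Bernstein-type $2(p_q+\Delta_q/3)$. The Hellinger/Chernoff computation above is what we will try first. If it fails, the fallback is to note that $\Delta_q\le p_q$ and accept a slightly weaker constant.
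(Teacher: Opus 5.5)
Your proof is correct and follows the paper's argument step for step. The structure matches: a pairwise comparison statistic plus a union bound for the selection error, the $\ell_\infty$-Lipschitz property of $\max$ with the sub-Gaussian maximal inequality for the uniform term, the split on $\{\hat z_N = z_q^\star\}$ for the margin term, and the inclusion $\{\hat a\ne a_q^\star\}\subseteq\{\hat z_N\ne z_q^\star\}$.

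The one difference is the pairwise tail, and there your route is the sounder one. The paper only asserts that a ``Bernstein--Chernoff bound'' yields $\exp(-n\Delta_q^2/(2p_q))$; a standard Bernstein bound carries a linear term in the denominator and does not give that constant. Your optimized Chernoff bound $\exp\bigl(-n(\sqrt{\pi_{q,1}}-\sqrt{\pi_{q,k}})^2\bigr)$, combined with $(\sqrt{\pi_{q,1}}+\sqrt{\pi_{q,k}})^2\le 2(\pi_{q,1}+\pi_{q,k})\le 2p_q$, delivers exactly $\exp(-n\Delta_q^2/(2p_q))$. So your weaker-constant fallback is not needed, and it would not have proved the statement as written anyway.
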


Using Sem$_1$-ECE, both the confidence error and selection error
converge to $0$ as $n\rightarrow \infty$. Next, we provide the bound
for Sem$_2$-ECE.

\begin{theorem}[Bounds for confidence and selection errors for $\hat c_2$]\label{thm:bound-c2}
Let $\hat c=\hat c_2$ and $\hat a=Y_q(\hat z_N)$,
\[
\varepsilon_{n}
\le
\mathbb E_q
\left[
\sqrt{\frac{\pi_{q,1}(1-\pi_{q,1})}{m}}
+
(K_q-1)
\exp\left(
-\frac{n\Delta_q^2}{2p_q}
\right)
\right],
\]
and $\delta_n$ has the same upper bound as in \Cref{boundc1}.
\end{theorem}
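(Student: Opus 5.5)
The plan is to bound $\varepsilon_n$ for $\hat c_2$ by conditioning on $q$ and splitting on whether the selection block recovered the true mode. Throughout, $\pi_{q,1}$ denotes $c_q^\star=\pi_{q,z_q^\star}$.

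\textbf{Step 1 (triangle split).} Write
\[
|\hat c_2 - c_q^\star| \le |\hat\pi_E(\hat z_N) - \pi_{q,\hat z_N}| + |\pi_{q,\hat z_N} - \pi_{q,z_q^\star}|.
\]

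\textbf{Step 2 (first term).} Since $E$ is independent of $N$, conditional on $\hat z_N=k$ the quantity $m\hat\pi_E(k)$ is $\mathrm{Binomial}(m,\pi_{q,k})$. Cauchy--Schwarz then gives
\[
\mathbb E\bigl[|\hat\pi_E(\hat z_N)-\pi_{q,\hat z_N}| \,\big|\, q,\hat z_N\bigr] \le \sqrt{\pi_{q,\hat z_N}(1-\pi_{q,\hat z_N})/m}.
\]
On the event $\{\hat z_N=z_q^\star\}$ this is exactly $\sqrt{\pi_{q,1}(1-\pi_{q,1})/m}$.

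\textbf{Step 3 (off-event and second term).} The second term vanishes on $\{\hat z_N=z_q^\star\}$. Off that event, both terms together are at most $1$, since $\hat c_2$ and $c_q^\star$ lie in $[0,1]$. It is cleanest to bound directly:
\[
\mathbb E\bigl[|\hat c_2-c_q^\star|\,\big|\,q\bigr] \le \mathbb E\bigl[|\hat\pi_E(z_q^\star)-\pi_{q,1}|\bigr] + \Pr(\hat z_N\ne z_q^\star\mid q).
\]
This holds because on the good event $\hat c_2=\hat\pi_E(z_q^\star)$, and on the bad event the error is at most $1$. The first expectation is at most $\sqrt{\pi_{q,1}(1-\pi_{q,1})/m}$ by Jensen's inequality on the binomial variance.

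\textbf{Step 4 (misselection probability).} It remains to show
\[
\Pr(\hat z_N\ne z_q^\star\mid q)\le (K_q-1)\exp\!\Bigl(-\frac{n\Delta_q^2}{2p_q}\Bigr).
\]
This is the selection-error estimate already used in \Cref{boundc1} (and behind the second term of \Cref{thm:bias}), so I would invoke it directly. If it needs re-deriving: take a union bound over $k\ne z_q^\star$ of $\Pr(\hat\pi_N(k)\ge\hat\pi_N(z_q^\star))$. Each summand concerns the mean of $n$ i.i.d.\ variables $\mathbf 1\{Z_i=k\}-\mathbf 1\{Z_i=z_q^\star\}\in\{-1,0,1\}$, with mean $-(\pi_{q,z_q^\star}-\pi_{q,k})\le-\Delta_q$ and variance at most $\pi_{q,z_q^\star}+\pi_{q,k}\le p_q$. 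A Bernstein/Chernoff bound then gives the stated exponent. The main technical point is getting the clean constant $2p_q$ without the Bernstein range correction term. I would try a direct Chernoff bound on the trinomial moment generating function, or use a sub-Gaussian-type bound for the conditional binomial given the count landing in $\{k,z_q^\star\}$. Because the paper states this same exponent in \Cref{boundc1}, I would simply reuse that step.

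\textbf{Step 5 (selection error).} Taking $\mathbb E_q$ of Step 3 gives the bound on $\varepsilon_n$. For $\delta_n$, note that $\hat a\ne a_q^\star$ requires $\hat z_N\ne z_q^\star$. Moreover, $\hat a=Y_q(\hat z_N)$ depends only on block $N$, exactly as for $\hat c_1$. Hence $\delta_n\le\mathbb E_q[\Pr(\hat z_N\ne z_q^\star\mid q)]$, which is the same bound as in \Cref{boundc1} by Step 4.
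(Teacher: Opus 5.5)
Your proof is correct, but your Step 3 closes the argument differently from the paper.

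\textbf{The paper's route.} The paper stays with the triangle split of your Step 1 throughout. It conditions on \(\hat z_N\) and uses independence of \(E\) and \(N\) to treat \(\hat\pi_E(\hat z_N)\) as a binomial average with success probability \(\pi_{q,\hat z_N}\), whichever class was selected. It then needs the extra observation \(\pi_{q,k}(1-\pi_{q,k})\le\pi_{q,1}(1-\pi_{q,1})\) for every \(k\), which holds because \(\pi_{q,k}\le\min\{\pi_{q,1},1-\pi_{q,1}\}\le 1/2\) for \(k\ne z_q^\star\). This replaces the selected class's variance by the modal one, and the second term is bounded by the misselection probability.

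\textbf{Your route.} You instead split on \(\{\hat z_N=z_q^\star\}\), where \(\hat c_2=\hat\pi_E(z_q^\star)\) is the frequency of a \emph{fixed} class, and you pay \(1\) off that event. This is exactly the paper's margin-sensitive argument for \(\hat c_1\) in \Cref{boundc1}. It also makes your Step 2 superfluous: the proof rests entirely on the display in Step 3.

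\textbf{What each buys.} Your route is more elementary. It uses neither the variance comparison nor the independence of the blocks, so the bound holds for any \(E\) of size \(m\), even one overlapping \(N\); with \(E=N\) it recovers the \(\hat c_1\) bound. The paper's route gives a stronger intermediate statement, \(\mathbb E[|\hat\pi_E(\hat z_N)-\pi_{q,\hat z_N}|\mid q]\le\sqrt{\pi_{q,1}(1-\pi_{q,1})/m}\), whether or not selection succeeded. That cleanly separates held-out estimation error from selection error, and it is where the held-out design of \(\mathrm{Sem}_2\) is actually used.

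\textbf{The constant in Step 4.} The direct Chernoff route you suggested does give the clean constant, with no range correction.
\begin{itemize}
\item For \(\xi=\mathbf 1\{Z=z_q^\star\}-\mathbf 1\{Z=k\}\), one has \(\inf_{t>0}\mathbb E[e^{-t\xi}]=1-(\sqrt{\pi_{q,1}}-\sqrt{\pi_{q,k}})^2\).
\item Hence \(\Pr(\hat\pi_N(k)\ge\hat\pi_N(z_q^\star)\mid q)\le\exp(-n(\sqrt{\pi_{q,1}}-\sqrt{\pi_{q,k}})^2)\).
\item Next, \((\sqrt{\pi_{q,1}}-\sqrt{\pi_{q,k}})^2\ge(\pi_{q,1}-\pi_{q,k})^2/(2(\pi_{q,1}+\pi_{q,k}))\ge\Delta_q^2/(2p_q)\).
\item The last inequality holds because \(x\mapsto(\pi_{q,1}-x)^2/(\pi_{q,1}+x)\) is decreasing on \([0,\pi_{q,1}]\) and \(\pi_{q,k}\le\pi_{q,z_q^{(2)}}\).
\end{itemize}
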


Therefore, both Sem$_1$-ECE and Sem$_2$-ECE asymptotically converge
to the true ECE. The bounds differ only in the sample for the Bernoulli term:
$\mathrm{Sem}_1$ reuses block $N$ at rate $1/\sqrt n$,
$\mathrm{Sem}_2$ averages over $E$ at $1/\sqrt m$;
\Cref{boundc1} additionally retains the Hoeffding term as a
margin-free fallback.

Combining \Cref{thm:ece-bound,boundc1,thm:bound-c2}, convergence is
exponentially fast on the large-margin regime
$\{q : \tilde m_q^2 \geq \log K_q\}$. The residual separation
concentrates on the low-margin regime
$\mathcal{Q}_{\mathrm{low}} := \{q : \tilde m_q^2 < \log K_q\}$,
resolved next to leading order.

\subsection{ECE Comparison in the Low-Margin Regime}\label{sec:comparison}

On $\mathcal{Q}_{\mathrm{low}}$, both $\text{Sem}_1$ and
$\text{Sem}_2$ have $\Theta(n^{-1/2})$ bias by \Cref{thm:bias}. To
compare the two estimators we decompose the bias by sign. Both
biases originate from the event $\{\hat z_N \ne z_q^\star\}$, but on
this event $\text{Sem}_1$ overshoots $c_q^\star$ (Jensen's
inequality, $\mathbb{E}[\max_z \hat\pi_N(z) \mid q] \ge c_q^\star$),
while $\text{Sem}_2$ undershoots ($\hat\pi_E(\hat z_N)$ is the
held-out frequency of a non-modal class).

\textbf{Bin-interior assumption.}
Throughout \Cref{sec:comparison} we assume $c_q^\star$ is bounded
away from the bin boundaries $\mathcal T$ on a vanishing-measure set:
\begin{equation}\label{eq:bin-interior}
\Pr_q\!\big(\mathrm{dist}(c_q^\star, \mathcal{T}) \le \eta_n\big)
= o(n^{-1/2}),
\quad
\sqrt n\,\eta_n \to \infty.
\end{equation}
Under \eqref{eq:bin-interior}, $\hat c_1, \hat c_2$ lie in the same
bin as $c_q^\star$ with probability $1 - o(n^{-1/2})$, the
binning-instability term in \Cref{thm:ece-bound} contributes at order
$o(n^{-1/2})$ rather than at the unconditional $O(n^{-1/4})$
ceiling, and the leading-order behavior of $\text{Sem}_i$-ECE on
$\mathcal{Q}_{\mathrm{low}}$ is governed by the conditional bias
expansion below.

\textbf{Bias expansion.}
For $q \in \mathcal{Q}_{\mathrm{low}}$, a local CLT at the boundary
$\hat\pi_N(z_q^\star) = \hat\pi_N(z_q^{(2)})$ with a folded-normal
calculation (\Cref{app:bias-expansion}) gives, to leading order,
\begin{equation}\label{eq:bias-expansion}
\mathbb{E}[\hat c_1 - c_q^\star \mid q]
= \tfrac{\sqrt{p_q}}{\sqrt n}\,J(\tilde\lambda_q) + o(n^{-1/2}),
\quad
\mathbb{E}[\hat c_2 - c_q^\star \mid q]
= -\tfrac{\sqrt{p_q}}{\sqrt n}\,S(\tilde\lambda_q) + o(n^{-1/2}),
\end{equation}
with positive Jensen bias
$J(\tilde\lambda) := \varphi(2\tilde\lambda) - 2\tilde\lambda\Phi(-2\tilde\lambda)$
(the leading-order winner's curse
from~\eqref{eq:winners-curse}) and selection bias
$S(\tilde\lambda) := 2\tilde\lambda\Phi(-2\tilde\lambda)$.
The ECE-level comparisons are governed by
\begin{equation}\label{eq:gA-gB}
g_A(\tilde\lambda) := J + S = \varphi(2\tilde\lambda),
\qquad
g_B(\tilde\lambda) := J - S
= \varphi(2\tilde\lambda) - 4\tilde\lambda\Phi(-2\tilde\lambda),
\end{equation}
where $g_A > 0$ on $\mathcal{Q}_{\mathrm{low}}$, and $g_B$ is
strictly decreasing on $[0,\infty)$ with unique positive root
$\tilde\lambda^\star \approx 0.306$, i.e., $\tilde m^\star
= 2\tilde\lambda^\star \approx 0.612$
(\Cref{fig:regime}b; uniqueness proof in \Cref{app:lambda-star}).
\begin{definition}[Jensen-dominated regime]\label{def:jdr}
The \emph{Jensen-dominated regime} (JDR) is
$\mathcal{Q}_{\mathrm{JDR}}
:= \{q : \tilde\lambda_q < \tilde\lambda^\star\}
\subsetneq \mathcal{Q}_{\mathrm{low}}$.
\end{definition}

\begin{figure}[t]
\vspace{-0.5cm}
\centering
\small
\includegraphics[width=0.48\linewidth]{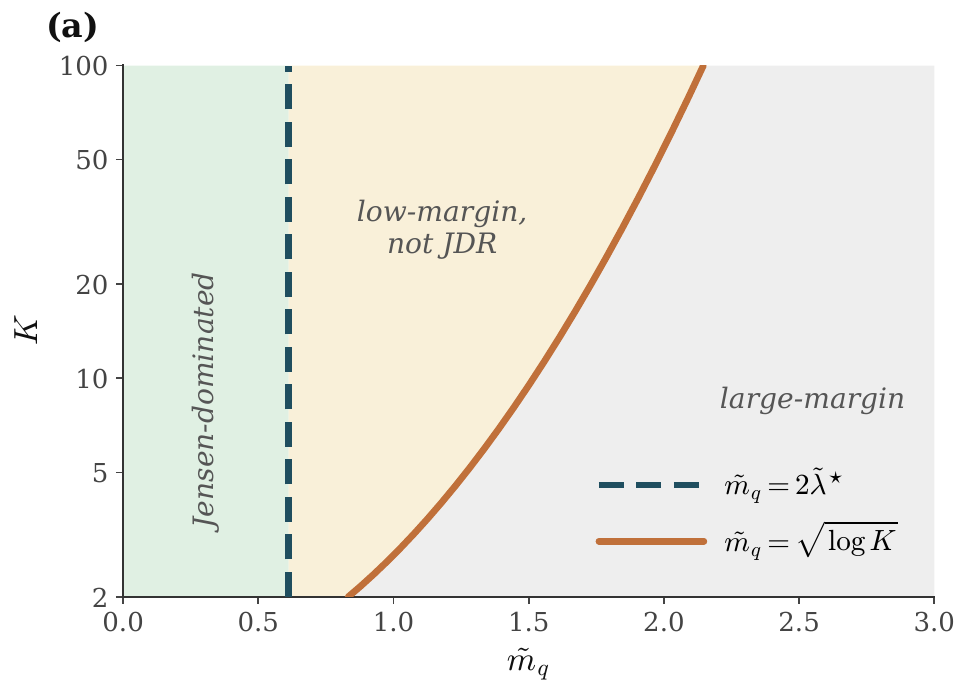}
\hfill
\includegraphics[width=0.48\linewidth]{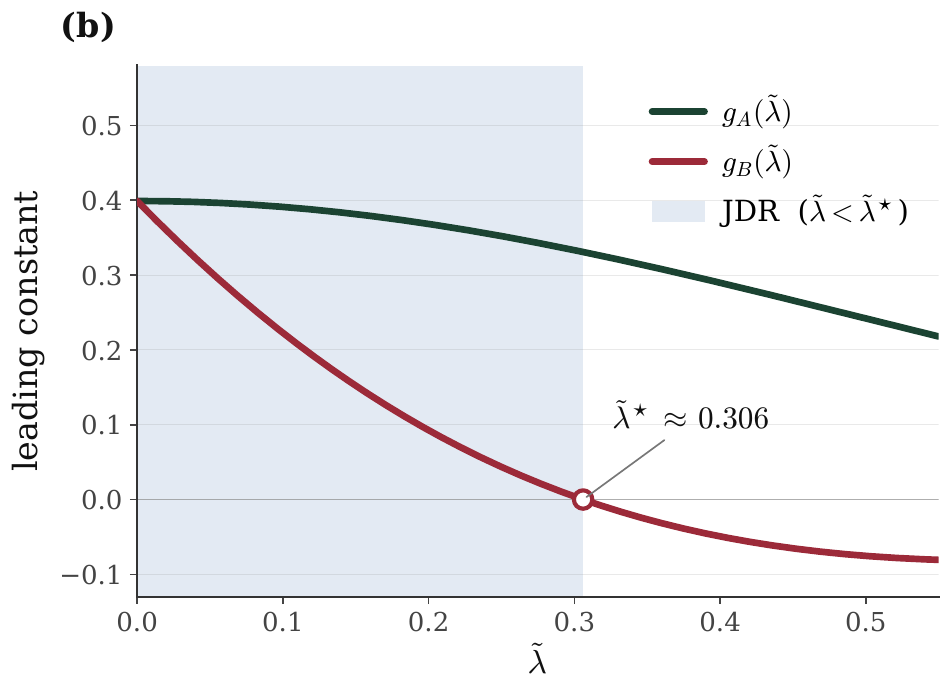}
\vspace{-0.3cm}
\caption{(a) Regime diagram on the $(\tilde m_q, K_q)$ plane,
partitioned by the JDR boundary $\tilde m_q = 2\tilde\lambda^\star$
(\Cref{thm:sharp}, dashed) and the crossover
$\tilde m_q = \sqrt{\log K_q}$ (\Cref{thm:bias}, solid). In JDR
(green), $\mathrm{Sem}_2$ wins on both raw ECE and oracle distance;
in the intermediate band (yellow), $\mathrm{Sem}_2$ has smaller raw
ECE but is farther from the oracle; in the large-margin region
(gray), the two estimators are asymptotically indistinguishable.
(b) Leading constants from~\eqref{eq:gA-gB}:
$g_A(\tilde\lambda) = \varphi(2\tilde\lambda) > 0$ everywhere, while
$g_B(\tilde\lambda) = \varphi(2\tilde\lambda) - 4\tilde\lambda\Phi(-2\tilde\lambda)$
is positive only on $\tilde\lambda < \tilde\lambda^\star \approx 0.306$
(shaded).}
\label{fig:regime}
\end{figure}

\textbf{Direct ECE gap.}
Adding the two expansions in \eqref{eq:bias-expansion} sign-aligns
the biases: $\text{Sem}_1$ overshoots $c_q^\star$ by
$J(\tilde\lambda_q)$ and $\text{Sem}_2$ undershoots by
$S(\tilde\lambda_q)$. Under over-confidence, the absolute deviations
of the two estimators from $\bar a$ shift by $-J$ and $+S$
respectively, producing a gap of $J + S = g_A$.

\begin{theorem}[Direct ECE gap]\label{thm:gap}
Suppose $\mathbb{E}_q[c_q^\star] - \bar a > 0$ on
$\mathcal{Q}_{\mathrm{low}}$, where $\bar a := \mathbb{E}_q[\hat a]$
is the deployment accuracy. Then
\begin{equation}\label{eq:gap}
\text{Sem}_1\text{-ECE} - \text{Sem}_2\text{-ECE}
= \tfrac{1}{\sqrt n}\,
\mathbb{E}_q\!\left[\sqrt{p_q}\,g_A(\tilde\lambda_q)\right]
+ o(n^{-1/2}) \;>\; 0.
\end{equation}
\end{theorem}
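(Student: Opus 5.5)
The plan is to reduce both binned ECEs to signed bin-wise sums and then subtract the bias expansion \eqref{eq:bias-expansion} inside each bin. Write
\[
\mathrm{Sem}_i\text{-ECE}=\sum_{\ell=1}^L\big|\mathbb E[(\hat a-\hat c_i)\mathbf 1\{\hat c_i\in\mathcal I_\ell\}]\big| .
\]
Under the bin-interior assumption \eqref{eq:bin-interior}, $\hat c_1$, $\hat c_2$ and $c_q^\star$ fall in the same bin outside an event of probability $o(n^{-1/2})$. This event contributes at most $o(n^{-1/2})$, since the integrands are bounded by $1$. To justify it I would use Hoeffding/Bernstein concentration: $|\hat c_i-c_q^\star|\le\eta_n$ with overwhelming probability when $\sqrt n\,\eta_n\to\infty$, and $m\asymp n$ handles $\hat c_2$. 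So, up to $o(n^{-1/2})$, we may replace the indicator $\mathbf 1\{\hat c_i\in\mathcal I_\ell\}$ by the common, estimator-independent indicator $\mathbf 1\{c_q^\star\in\mathcal I_\ell\}$. Both estimators then share the deployed label $\hat a$, the same bin assignment, and hence the same accuracy term $A_\ell:=\mathbb E[\hat a\,\mathbf 1\{c_q^\star\in\mathcal I_\ell\}]$.

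Next, inside each bin, condition on $q$ and insert \eqref{eq:bias-expansion}. On $\mathcal Q_{\mathrm{low}}$ this gives
\[
\mathbb E[\hat c_1\mathbf 1_\ell]=C_\ell+n^{-1/2}\mathbb E[\sqrt{p_q}J\mathbf 1_\ell]+o(n^{-1/2}),\qquad
\mathbb E[\hat c_2\mathbf 1_\ell]=C_\ell-n^{-1/2}\mathbb E[\sqrt{p_q}S\mathbf 1_\ell]+o(n^{-1/2}),
\]
where $C_\ell:=\mathbb E[c_q^\star\mathbf 1_\ell]$. On the large-margin complement, \Cref{thm:bias} makes both biases $o(n^{-1/2})$. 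Because $J,S\le\varphi(0)$ are bounded, dominated convergence lets the $o(n^{-1/2})$ remainders be integrated over $q$; I will assume the local CLT expansion holds uniformly enough for this. The over-confidence hypothesis $\mathbb E_q[c_q^\star]-\bar a>0$, read bin-wise as $C_\ell-A_\ell>0$ with a margin of order larger than $n^{-1/2}$ in every bin carrying $\mathcal Q_{\mathrm{low}}$ mass, fixes the sign of $C_\ell-A_\ell$. The absolute values then open without sign flips:
\[
|A_\ell-\mathbb E[\hat c_1\mathbf 1_\ell]|-|A_\ell-\mathbb E[\hat c_2\mathbf 1_\ell]|=n^{-1/2}\,\mathbb E\big[\sqrt{p_q}(J+S)\mathbf 1_\ell\mathbf 1_{\mathcal Q_{\mathrm{low}}}\big]+o(n^{-1/2}).
\]
Summing over $\ell$ and using $J+S=g_A=\varphi(2\tilde\lambda)$ gives \eqref{eq:gap}. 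Contributions from $q\notin\mathcal Q_{\mathrm{low}}$ are $o(n^{-1/2})$, so extending the expectation to all $q$ changes nothing at leading order. Strict positivity follows from $\varphi>0$ and $p_q>0$, provided $\mathcal Q_{\mathrm{low}}$ has positive mass; otherwise the leading term is zero and only the inequality degenerates.

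The main obstacle is the sign step. The stated hypothesis is global, but ECE is a sum of bin-wise absolute values. If some bin were under-confident, the two shifts $+J$ and $-S$ would partially cancel instead of adding. I would first try to interpret, or mildly strengthen, the hypothesis as bin-wise over-confidence on the bins hit by $\mathcal Q_{\mathrm{low}}$, which matches the paper's heuristic ("absolute deviations shift by $-J$ and $+S$"). Failing that, I would show that bins with $|C_\ell-A_\ell|=O(n^{-1/2})$ have vanishing mass. A secondary technical point is making the replacement of the indicators rigorous: this needs $|\hat c_i-c_q^\star|=O_p(n^{-1/2})$ together with the $\eta_n$-margin, which \Cref{boundc1,thm:bound-c2} supply.
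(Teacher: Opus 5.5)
Your proposal is sound once over-confidence is imposed bin by bin (modulo the fixes below). It takes a genuinely different, more careful route than the paper.

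The paper appeals to \eqref{eq:bin-interior} only to dismiss bin crossing. Its proof then writes $\mathrm{ECE}(\hat c_i)=\mathbb E_q[\hat c_i]-\bar a$ ``to leading order''. In other words, it treats $\mathrm{Sem}_i$-ECE as the single global gap $|\mathbb E_q[\hat c_i]-\bar a|$, opens that one absolute value using the fixed $A=\mathbb E_q[c_q^\star]-\bar a>0$, and subtracts the integrated expansions \eqref{eq:bias-expansion}. Since $\sum_\ell|x_\ell|=|\sum_\ell x_\ell|$ only when all $x_\ell$ share a sign, that step is valid only for $L=1$ or when every bin is over-confident. So the obstacle you isolate is real; the paper silently assumes it away rather than resolving it.

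Your bin-wise computation buys more. For the actual $L=10$ metric it yields the general leading term $n^{-1/2}\sum_\ell\sigma_\ell\,\mathbb E[\sqrt{p_q}\,g_A(\tilde\lambda_q)\mathbf 1_\ell]$, with $\sigma_\ell=\operatorname{sign}(C_\ell-A_\ell)$, whenever each $|C_\ell-A_\ell|\gg n^{-1/2}$. The hypothesis sits on oracle-bin quantities, and the price is the indicator-replacement step. Alternatively, assume $\mathbb E[(\hat c_i-\hat a)\mathbf 1\{\hat c_i\in\mathcal I_\ell\}]\ge 0$ for all $\ell$ and $i$. Then the signed sum telescopes to $\mathbb E[\hat c_i-\hat a]$, and you recover exactly the paper's computation with no replacement, no \eqref{eq:bin-interior}, and no condition on $m$. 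The cost is a hypothesis on $n$-dependent estimator bins.

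There are three things to fix.

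(i) Drop the fallback; it cannot rescue the global hypothesis. A bin with fixed $C_\ell-A_\ell<0$ contributes $-n^{-1/2}\mathbb E[\sqrt{p_q}\,g_A(\tilde\lambda_q)\mathbf 1_\ell]$ to the gap at leading order. Nothing in $\mathbb E_q[c_q^\star]-\bar a>0$ excludes such bins. Averaged over all $q$, over-confidence can be driven entirely by high-confidence bins, whereas low-margin questions have $c_q^\star\le(1+\Delta_q)/2$. Averaged over $\mathcal Q_{\mathrm{low}}$, it does not fix the sign in each of the several bins that set occupies. Bin-wise over-confidence is therefore necessary, and under the stated hypothesis alone the formula is false in general.

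(ii) The replacement needs $\Pr(|\hat c_i-c_q^\star|>\eta_n)=o(n^{-1/2})$. Neither $O_p(n^{-1/2})$ nor Markov applied to the $L^1$ bounds of \Cref{boundc1,thm:bound-c2} provides this; Markov gives only $O(1/(\sqrt n\,\eta_n))$. Use the union-Hoeffding tail $2K_q e^{-2n\eta^2}$ for $\max_k|\hat\pi_N(k)-\pi_{q,k}|$. This also covers $\hat c_2$, via $0\le c_q^\star-\pi_{q,\hat z_N}\le 2\max_k|\hat\pi_N(k)-\pi_{q,k}|$ plus the analogous tail on $E$. It requires $n\eta_n^2$ and $m\eta_n^2$ to exceed a suitable multiple of $\log n$, not merely $\sqrt n\,\eta_n\to\infty$.

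(iii) The $n^{-1/2}$ term is non-degenerate only in the local regime, where the law of $\tilde m_q$ is held fixed as $n$ grows. For a fixed population with $\Delta_q>0$ a.s., the leading term is itself $o(n^{-1/2})$ by dominated convergence, so positive mass of $\mathcal Q_{\mathrm{low}}$ does not give strictness. In the local regime, questions just above $\tilde m_q^2=\log K_q$ still carry $\Theta(n^{-1/2})$ bias: $J,S>0$ there, and the Bernstein term of \Cref{thm:bias} at that boundary is $(K_q-1)K_q^{-1/2}$, which is not small. So do not discard the complement of $\mathcal Q_{\mathrm{low}}$ as $o(n^{-1/2})$. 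Instead apply \eqref{eq:bias-expansion} to every $q$, as the paper does; that is what produces the expectation over all $q$ in \eqref{eq:gap}. Then impose bin-wise over-confidence on every bin carrying such mass.
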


\Cref{thm:gap} requires only over-confidence on
$\mathcal{Q}_{\mathrm{low}}$ --- no constraint on $\tilde\lambda_q$
within the regime: $\text{Sem}_2$-ECE is strictly smaller than
$\text{Sem}_1$-ECE throughout $\mathcal{Q}_{\mathrm{low}}$ at order
$n^{-1/2}$.

\textbf{Oracle ECE distance.}
\Cref{thm:gap} compares the two raw ECE values but does not say
which is closer to $\text{ECE}^\star$. A smaller raw ECE for
$\text{Sem}_2$ is consistent with two structural scenarios:
(a) $\text{Sem}_2$-ECE sits between $\text{Sem}_1$-ECE and
$\text{ECE}^\star$ (closer to oracle); or (b)
$\text{Sem}_2$-ECE has overshot through $\text{ECE}^\star$ to
the opposite side (farther from oracle, but smaller in absolute
distance to $\bar a$). Distinguishing (a) from (b) hinges on $|J|$
versus $|S|$, i.e., on $g_B$.

\begin{theorem}[Sharp oracle ECE distance on JDR]\label{thm:sharp}
Suppose $\mathbb{E}_q[c_q^\star] - \bar a \ge c_0 / \sqrt n$ for some
fixed $c_0 > 0$ (non-degenerate over-confidence on
$\mathcal{Q}_{\mathrm{low}}$), and the population is supported in
$\mathcal{Q}_{\mathrm{JDR}}$, i.e.,
$\sup_q \tilde\lambda_q < \tilde\lambda^\star$. Then
\begin{equation}\label{eq:sharp}
\big|\text{Sem}_1\text{-ECE} - \text{ECE}^\star\big|
- \big|\text{Sem}_2\text{-ECE} - \text{ECE}^\star\big|
= \tfrac{1}{\sqrt n}\,
\mathbb{E}_q\!\left[\sqrt{p_q}\,g_B(\tilde\lambda_q)\right]
+ o(n^{-1/2}) \;>\; 0.
\end{equation}
\end{theorem}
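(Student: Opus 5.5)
Under the bin-interior assumption \eqref{eq:bin-interior}, the plan is to collapse each binned ECE to a single signed aggregate at leading order, and then substitute the bias expansion \eqref{eq:bias-expansion}.

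\textbf{Step 1: reduce each binned ECE to one signed gap.} By \eqref{eq:bin-interior}, $\hat c_1$, $\hat c_2$ and $c_q^\star$ fall in the same bin with probability $1-o(n^{-1/2})$. Hence, up to $o(n^{-1/2})$, each bin's contribution is
\[
\bigl|\mathbb E[(\hat a-\hat c_i)\mathbf 1\{c_q^\star\in\mathcal I_\ell\}]\bigr|.
\]
By \Cref{boundc1} the selection error satisfies $\delta_n=o(1)$, so replacing $\hat a$ by $a_q^\star$ inside the oracle term costs only $o(1)$. I would instead keep $\hat a$ in all three quantities and compare through $\bar a$, as the paper does.

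Working at the aggregate level, write $D:=\mathbb E_q[c_q^\star]-\bar a\ge c_0/\sqrt n$. Then:
\begin{itemize}
\item $\mathrm{Sem}_1\text{-ECE}\approx D+\tfrac1{\sqrt n}\mathbb E_q[\sqrt{p_q}J]$;
\item $\mathrm{Sem}_2\text{-ECE}\approx \bigl|D-\tfrac1{\sqrt n}\mathbb E_q[\sqrt{p_q}S]\bigr|$;
\item $\mathrm{ECE}^\star\approx D$.
\end{itemize}
These follow from \eqref{eq:bias-expansion} integrated over $q$. Questions outside $\mathcal Q_{\mathrm{low}}$ contribute $o(n^{-1/2})$ by \Cref{thm:bias}, whose Bernstein term is exponentially small there. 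Binwise, the same argument applies: within a bin whose signed gap is positive at order at least $n^{-1/2}$, the absolute value is linear in the perturbation.

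\textbf{Step 2: compare distances to the oracle.} The first distance is $|\mathrm{Sem}_1-\mathrm{ECE}^\star|=\tfrac1{\sqrt n}\mathbb E[\sqrt{p_q}J]$. For the second, the key point is that the $\mathrm{Sem}_2$ signed gap $D-\tfrac1{\sqrt n}\mathbb E[\sqrt{p_q}S]$ may change sign. This is exactly where the non-degeneracy $D\ge c_0/\sqrt n$ enters, and possibly a requirement that $c_0$ dominate $\mathbb E[\sqrt{p_q}S]$; I would read the hypothesis as guaranteeing no sign flip, after checking it against the paper's intent. Without a sign flip, $|\mathrm{Sem}_2-\mathrm{ECE}^\star|=\tfrac1{\sqrt n}\mathbb E[\sqrt{p_q}S]$, and the difference of the two distances is $\tfrac1{\sqrt n}\mathbb E[\sqrt{p_q}(J-S)]=\tfrac1{\sqrt n}\mathbb E[\sqrt{p_q}g_B]$.

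If a flip does occur, the distance instead equals $\bigl|2D-\mathbb E[\sqrt{p_q}S]/\sqrt n\bigr|$. This is still at most $\mathbb E[\sqrt{p_q}S]/\sqrt n$, so the inequality survives, but the equality does not. Recognizing this case is a key subtlety.

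\textbf{Step 3: positivity.} On $\sup_q\tilde\lambda_q<\tilde\lambda^\star$ we have $g_B(\tilde\lambda_q)>0$, by strict monotonicity and the unique root (\Cref{app:lambda-star}). Since $p_q>0$, the expectation is strictly positive. Combined with a uniform lower bound $g_B\ge g_B(\sup\tilde\lambda_q)>0$, this makes the leading term of exact order $n^{-1/2}$, so it dominates the $o(n^{-1/2})$ remainder.

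\textbf{Main obstacle.} The hardest part is making the remainders uniform in $q$. The local-CLT expansion \eqref{eq:bias-expansion} is pointwise, so integrating $o(n^{-1/2})$ over $q$ requires dominated convergence. I would justify it with the uniform Hoeffding envelope of \Cref{thm:bias}, which bounds $\sqrt n\,|\text{bias}|$ by $C\sqrt{\log 2K_q}$, together with integrability of $\sqrt{\log K_q}$. A second delicate point is handling the binwise signs rather than the aggregate sign. For that I would assume over-confidence holds within each bin of mass order one, and absorb bins with vanishing mass into the error term.
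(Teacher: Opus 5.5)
Your argument follows the paper's proof. There, each oracle distance is identified with $|\mathbb{E}_q[\hat c_i]-\mathbb{E}_q[c_q^\star]|$. The expansion \eqref{eq:bias-expansion} turns these into $\mathbb{E}_q[\sqrt{p_q}\,J(\tilde\lambda_q)]/\sqrt n$ and $\mathbb{E}_q[\sqrt{p_q}\,S(\tilde\lambda_q)]/\sqrt n$, subtraction yields $g_B$, and \Cref{app:lambda-star} gives positivity on $\mathcal Q_{\mathrm{JDR}}$. The paper never works at the level of individual bins. So your bin-interior reduction, your binwise over-confidence assumption, and your dominated-convergence remark are added care rather than a different route. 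The binwise sign condition is in fact tacitly needed by the paper too, since it equates binned ECE with an aggregate absolute gap.

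Your sign-flip caveat is correct, and it exposes a real weakness in the paper's argument. The paper simply asserts that $D\ge c_0/\sqrt n$ prevents a flip. With $s:=\mathbb{E}_q[\sqrt{p_q}\,S(\tilde\lambda_q)]/\sqrt n$, the $\mathrm{Sem}_2$ signed gap is $D-s+o(n^{-1/2})$. The stated equality therefore requires, for example, that $\sqrt n\,(D-s)$ stay above a positive constant. This is automatic if $D$ is of order one, which is what the proof of \Cref{thm:gap} tacitly uses. If instead $0<D<s$, the $\mathrm{Sem}_2$ distance is $|2D-s|<s$ to leading order, and only the strict inequality survives, as you observe.

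One justification in your Step 1 is wrong: \Cref{boundc1} does not give $\delta_n=o(1)$ under this theorem's hypotheses. Support in $\mathcal Q_{\mathrm{JDR}}$ means $\tilde m_q<2\tilde\lambda^\star\approx 0.612$. Hence the bound $(K_q-1)e^{-\tilde m_q^2/2}$ exceeds $0.82\,(K_q-1)$, and in fact $\Pr(\hat z_N\ne z_q^\star\mid q)$ is asymptotically at least $\Phi(-\tilde m_q)>0.27$. So $\hat a\ne a_q^\star$ with probability of order one whenever the two leading classes differ in correctness. Even an $o(1)$ error would be too coarse at the $n^{-1/2}$ scale.

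Consequently, $\mathrm{ECE}^\star\approx\mathbb{E}_q[c_q^\star]-\bar a$ does not follow from the definition $\mathrm{ECE}(c_q^\star,a_q^\star)$. It amounts to replacing $a_q^\star$ by $\hat a$ in the oracle term, which the paper's proof also does without comment by setting $\mathrm{ECE}^\star=A$. Keeping $\hat a$ throughout is consistent with the paper. Present it, however, as a redefinition of the oracle target rather than as a consequence of small selection error.
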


\Cref{thm:sharp} adds two assumptions to \Cref{thm:gap}: (i) JDR,
ensuring $g_B(\tilde\lambda_q) > 0$ pointwise; and (ii)
non-degenerate over-confidence, ensuring the absolute values in
\eqref{eq:sharp} do not flip sign within the $o(n^{-1/2})$ remainder.
The conclusion is correspondingly stronger: $\text{Sem}_2$-ECE is
closer to $\text{ECE}^\star$ than $\text{Sem}_1$-ECE is, not
merely smaller in absolute terms.

\textbf{Regime structure.}
Combining \Cref{thm:bias,thm:ece-bound,boundc1,thm:bound-c2,thm:gap,thm:sharp}
(\Cref{fig:regime}a), the population $\mathcal{Q}$ partitions into
three regimes:
\begin{itemize}
\item \emph{Large-margin} ($\tilde m_q^2 \ge \log K_q$):
\Cref{boundc1,thm:bound-c2} give $\varepsilon_n, \delta_n$
exponentially small; via \Cref{thm:ece-bound},
$|\text{Sem}_i$-ECE $- \text{ECE}^\star|$ is exponentially
small, and $\text{Sem}_1, \text{Sem}_2$ are asymptotically
indistinguishable.
\item \emph{Low-margin, not JDR}
($2\tilde\lambda^\star \le \tilde m_q < \sqrt{\log K_q}$):
\Cref{thm:gap} applies, so $\text{Sem}_2$-ECE $<$
$\text{Sem}_1$-ECE; but $g_B(\tilde\lambda_q) < 0$, so
$\text{Sem}_2$-ECE is \emph{farther} from $\text{ECE}^\star$
than $\text{Sem}_1$-ECE. The smaller raw ECE is achieved by
overshooting through $\text{ECE}^\star$.
\item \emph{JDR} ($\tilde m_q < 2\tilde\lambda^\star$): both
\Cref{thm:gap,thm:sharp} apply; $\text{Sem}_2$ wins on both
metrics.
\end{itemize}
\section{Experiments}
\label{sec:experiments}

We evaluate the Sem-ECE framework on three open-ended QA benchmarks
across five frontier LLMs, with three goals: (i) verify the
asymptotic predictions of \Cref{sec:theory} on real data
(\Cref{sec:exp_asymptotic,sec:exp_sharp}); (ii) compare against the
verbalized-confidence baseline at the per-pair level
(\Cref{sec:exp_main_results}); and (iii) characterize what the
resulting reliability diagrams reveal about semantic agreement versus
factual accuracy (\Cref{app:reliability}). Per-model breakdowns, the 
boundary-alignment numerics and the bootstrap protocols are in
\Cref{app:additional-figures,app:alignment,app:bootstrap}.

\subsection{Setup}
\label{sec:exp_setup}

\textbf{Datasets and models.}
\textbf{SimpleQA}~\citep{wei2024simpleqa} (short-form factoid),
\textbf{HLE}~\citep{phan2025hle} (expert-level multidisciplinary),
and \textbf{PopQA}~\citep{mallen2023popqa} (long-tail entity-centric).
We evaluate five commercial models accessed via their respective
APIs: OpenAI \texttt{gpt-5.4}~\citep{openai2025gpt5},
Anthropic \texttt{claude-opus-4.6}~\citep{anthropic2025claude},
Google \texttt{gemini-3.1-flash-lite-preview}~\citep{google2025gemini},
xAI \texttt{grok-4.20-0309}~\citep{xai2025grok} (non-reasoning),
and Mistral \texttt{mistral-large-latest}~\citep{mistral2024large}.
The 15 model--benchmark pairs form the evaluation grid; we draw
$n_{\max} = 50$ stochastic generations per question.

\textbf{Pipeline.}
For each question we (i) generate $n_{\max}$ responses, (ii) cluster
them into semantic answer classes via an LLM judge
(\texttt{gpt-5.4}), and (iii) grade each response against the
reference answer.

\textbf{Confidence sources.}
\textbf{Sem}$_1$: $\hat c_1 = \max_{z}\hat\pi_{[n_{\max}]}(z)$,
computed on the full pool of $n_{\max}=50$ generations.
\textbf{Sem}$_2$: $\hat c_2 = \hat\pi_{E}(\hat z_N)$ at $n=m=25$,
averaged over $R=10$ random half-splits $(N, E)$ of the same pool
(so the two estimators share underlying samples but use them
differently). \textbf{Verbalized confidence (Ver)}: elicited via
"\texttt{Confidence: X\%}", parsed from each of the $n_{\max}$
generations and averaged; parse failures imputed at $1.0$
~\citep{tian2023just,xiong2024can}.

\textbf{Metrics.}
$\text{Sem}_i$-ECE and Ver-ECE with $L = 10$ equal-width bins
(\Cref{sec:calibration-metric}). Stratification by margin uses
$\Delta_q$ as the regime axis; under the $p_q \to 1$ convention of
\Cref{sec:empirical-access}, $\tilde m_q = \sqrt n\,\Delta_q$, so the
regime boundaries from \Cref{thm:bias,thm:sharp} are visible directly
on the $\Delta_q$-axis.

\subsection{Asymptotic convergence}
\label{sec:exp_asymptotic}

\Cref{thm:bias,thm:ece-bound,boundc1,thm:bound-c2} predict that
$\text{Sem}_1$-ECE and $\text{Sem}_2$-ECE converge to a common
limit $\mathrm{ECE}^\star$ as $n \to \infty$. \Cref{fig:n-sweep} sub-samples each question's $n_{\max} = 50$ semantic-class assignments
down to $n \in \{10, 20, 30, 40, 50\}$ and recomputes pooled
$\text{Sem}_i$-ECE on each benchmark. The two curves approach a
common limit on every benchmark from \emph{opposite sides}:
$\text{Sem}_1$ from above (positive Jensen bias), $\text{Sem}_2$
from below (negative selection bias), which is the empirical signature of
the bias decomposition \eqref{eq:bias-expansion}.

\subsection{Sharp comparison: regime structure and rate}
\label{sec:exp_sharp}

\Cref{thm:gap} predicts that on $\mathcal{Q}_{\mathrm{low}}$ the
direct ECE gap is, to leading order in $1/\sqrt n$,
$\mathbb{E}_q[\sqrt{p_q}\,g_A(\tilde\lambda_q)]/\sqrt n$. Under the
$p_q \to 1$ convention, the prediction at any fixed standardized
margin $\tilde m$ collapses to a single number
$\varphi(\tilde m)/\sqrt n$. We test this prediction along three
axes: regime structure, leading constant, and convergence rate.

\textbf{Regime structure.}
\Cref{fig:ece-margin} shows the stratification of pooled $\text{Sem}_1$-ECE and $\text{Sem}_2$-ECE by per-question margin $\Delta_q$. The panels are divided into three regions based on the JDR boundary ($\Delta_q \approx 0.087$) and the low/large boundary ($\Delta_q = \sqrt{\log K_q / n}$) from \Cref{fig:regime}(a). The empirical results match theoretical predictions: separation is greatest below the JDR threshold, diminishes in the intermediate range, and converges above the low/large boundary.

\textbf{Leading constant and convergence rate.}
With \emph{no fitted constants}, the leading-order prediction
$\varphi(\tilde m^\star)/\sqrt n$ recovers the empirical
Sem$_1$-ECE $-$ Sem$_2$-ECE gap to within $11$--$27\%$ at both
regime boundaries on every benchmark, with empirical/theory
ratios consistently above $1$ at the JDR boundary and below $1$
at the low/large boundary (\Cref{app:alignment}).
On the low-margin sub-population, the gap shrinks at the
predicted $n^{-1/2}$ rate, with fitted log-log slopes within
$0.08$ of $-0.50$ across all three benchmarks
(\Cref{fig:rate}, \Cref{app:additional-figures}). The
sign-consistent boundary residual and the steeper-than-$-0.50$
slope direction are both consistent with a subleading $O(1/n)$
Edgeworth correction.

\begin{figure}[t]
\centering
\small
\includegraphics[width=\linewidth]{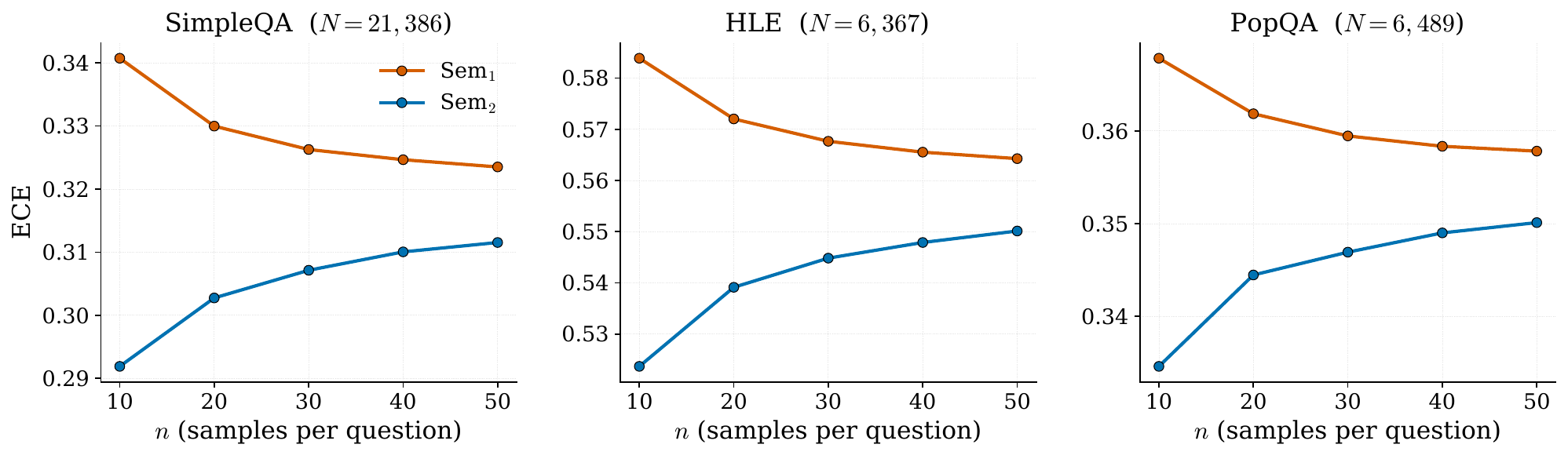}\vspace{-0.5cm}
\caption{Pooled $\text{Sem}_1$-ECE (orange) and $\text{Sem}_2$-ECE
(blue) as functions of the per-question budget $n \in [10, 50]$.
The two curves converge to a common limit on every benchmark from
opposite sides — the empirical signature of \Cref{thm:bias,thm:ece-bound}
via~\eqref{eq:bias-expansion}.}
\label{fig:n-sweep}
\end{figure}

\begin{figure}[t]
\centering
\small
\includegraphics[width=\linewidth]{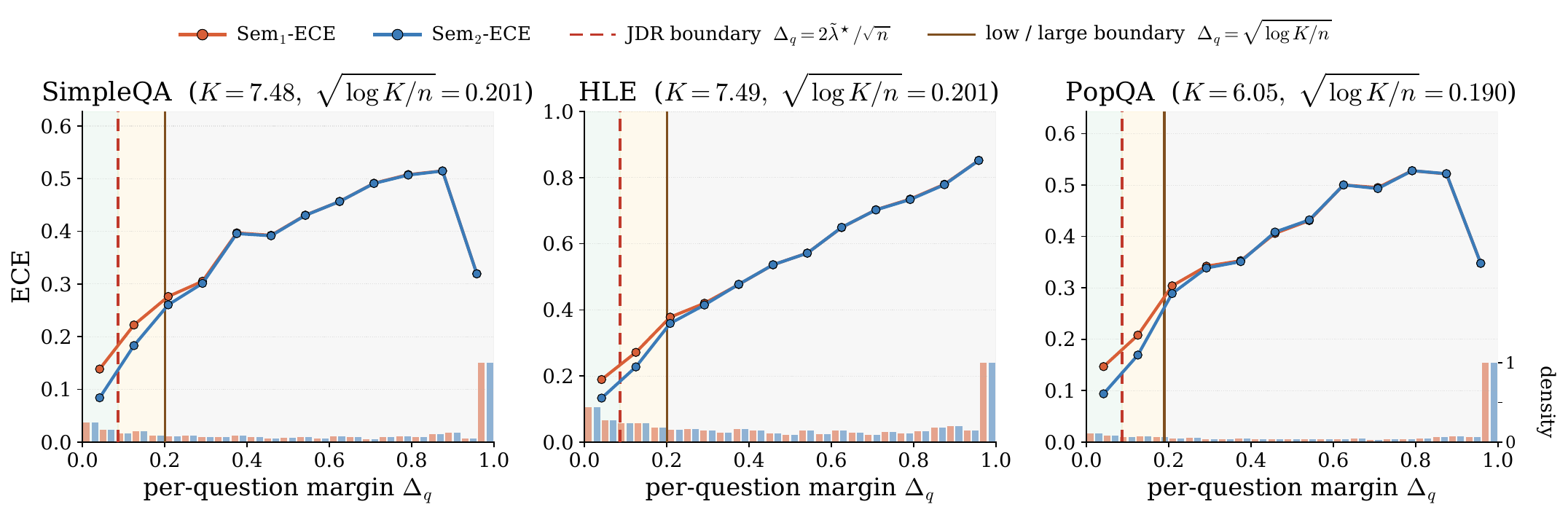}
\vspace{-0.5cm}
\caption{Pooled $\text{Sem}_1$-ECE and $\text{Sem}_2$-ECE
stratified by per-question margin $\Delta_q$ on SimpleQA (left),
HLE (middle), PopQA (right). The dashed red line marks the JDR
boundary $\Delta_q = 2\tilde\lambda^\star/\sqrt n$ and the solid
brown the low/large boundary $\Delta_q = \sqrt{\log K_q/n}$,
partitioning each panel into the three regions of
\Cref{fig:regime}(a). The two metrics overlap above the low/large
boundary and separate below, with the largest gap below the JDR
boundary, matching \Cref{thm:bias,thm:gap,thm:sharp}.}
\label{fig:ece-margin}
\end{figure}

\subsection{Cross-benchmark calibration}
\label{sec:exp_main_results}

\Cref{tab:per_pair_ece} reports per-pair binned ECE for the three
confidence sources.  $\text{Sem}_2$-ECE is no larger than
$\text{Sem}_1$-ECE on \emph{all 15 pairs}, which is an empirical
demonstration of \Cref{thm:gap} extending beyond the strict
low-margin regime. Among the three sources, $\text{Sem}_2$ is
lowest on $12$ pairs, Ver on $3$, and $\text{Sem}_1$ on none. The
strongest aggregate advantage of $\text{Sem}_2$ appears on HLE,
where it is the best-performing confidence source for all five
providers. Pooled reliability diagrams are shown in \Cref{fig:reliability};
see \Cref{app:reliability} for detailed discussion. 
A paired bootstrap ($B=1000$) confirms $\mathbb{E}[\hat c_1 - \hat c_2] > 0$ on all 15 pairs and the population-level ECE gap on 11 of 15
(\Cref{app:bootstrap}).

\begin{table}[t]
\centering
\caption{Per-pair binned ECE with $L = 10$ equal-width bins. Lower
ECE is better. Among $\text{Sem}_1$ and $\text{Sem}_2$ only, the
winning cell is highlighted in pink and the loser in blue;
Ver-ECE appears in grey baseline. Across all three sources, the
smallest ECE in each row is in \textbf{bold} and the
second-smallest is \underline{underlined}. $N$ is the number of
jointly clustered and graded questions; $\mathit{Acc}$ is per-pair
accuracy. $\text{Sem}_2$-ECE $\le \text{Sem}_1$-ECE in all $15$
pairs and is the lowest of the three sources in $12$ of $15$.}
\label{tab:per_pair_ece}
\setlength{\tabcolsep}{6pt}
\small
\begin{tabular}{l|lcc|ccc}
\toprule
Benchmark & Provider & $N$ & $\mathit{Acc}$ & Ver-ECE & Sem$_1$-ECE & Sem$_2$-ECE \\
\midrule
\multirow{5}{*}{SimpleQA}
 & OpenAI    & $4315$ & $0.336$ & \cellcolor{baselinegrey}$0.5151$ & \cellcolor{loserblue}$\underline{0.1982}$ & \cellcolor{winnerpink}$\bm{0.1786}$ \\
 & Anthropic & $4290$ & $0.482$ & \cellcolor{baselinegrey}$\bm{0.2030}$ & \cellcolor{loserblue}$0.3525$ & \cellcolor{winnerpink}$\underline{0.3451}$ \\
 & Gemini    & $4288$ & $0.609$ & \cellcolor{baselinegrey}$0.3769$ & \cellcolor{loserblue}$\underline{0.1808}$ & \cellcolor{winnerpink}$\bm{0.1724}$ \\
 & xAI       & $4169$ & $0.229$ & \cellcolor{baselinegrey}$0.6062$ & \cellcolor{loserblue}$\underline{0.3646}$ & \cellcolor{winnerpink}$\bm{0.3485}$ \\
 & Mistral   & $4324$ & $0.260$ & \cellcolor{baselinegrey}$0.6020$ & \cellcolor{loserblue}$\underline{0.5218}$ & \cellcolor{winnerpink}$\bm{0.5117}$ \\
\midrule
\multirow{5}{*}{HLE}
 & OpenAI    & $2128$ & $0.101$ & \cellcolor{baselinegrey}$0.6804$ & \cellcolor{loserblue}$\underline{0.4696}$ & \cellcolor{winnerpink}$\bm{0.4522}$ \\
 & Anthropic & $1486$ & $0.205$ & \cellcolor{baselinegrey}$\underline{0.4563}$ & \cellcolor{loserblue}$0.4574$ & \cellcolor{winnerpink}$\bm{0.4403}$ \\
 & Gemini    & $2111$ & $0.151$ & \cellcolor{baselinegrey}$0.7917$ & \cellcolor{loserblue}$\underline{0.5362}$ & \cellcolor{winnerpink}$\bm{0.5227}$ \\
 & xAI       & $2005$ & $0.069$ & \cellcolor{baselinegrey}$0.7317$ & \cellcolor{loserblue}$\underline{0.5984}$ & \cellcolor{winnerpink}$\bm{0.5840}$ \\
 & Mistral   & $2128$ & $0.053$ & \cellcolor{baselinegrey}$0.8143$ & \cellcolor{loserblue}$\underline{0.6868}$ & \cellcolor{winnerpink}$\bm{0.6749}$ \\
\midrule
\multirow{5}{*}{PopQA}
 & OpenAI    & $1997$ & $0.399$ & \cellcolor{baselinegrey}$0.3759$ & \cellcolor{loserblue}$\underline{0.2721}$ & \cellcolor{winnerpink}$\bm{0.2612}$ \\
 & Anthropic & $499$  & $0.505$ & \cellcolor{baselinegrey}$\bm{0.2765}$ & \cellcolor{loserblue}$0.3968$ & \cellcolor{winnerpink}$\underline{0.3893}$ \\
 & Gemini    & $1993$ & $0.508$ & \cellcolor{baselinegrey}$0.4585$ & \cellcolor{loserblue}$\underline{0.3130}$ & \cellcolor{winnerpink}$\bm{0.3069}$ \\
 & xAI       & $1896$ & $0.409$ & \cellcolor{baselinegrey}$0.4308$ & \cellcolor{loserblue}$\underline{0.3133}$ & \cellcolor{winnerpink}$\bm{0.3016}$ \\
 & Mistral   & $2000$ & $0.389$ & \cellcolor{baselinegrey}$\bm{0.4722}$ & \cellcolor{loserblue}$0.4795$ & \cellcolor{winnerpink}$\underline{0.4735}$ \\
\bottomrule
\end{tabular}
\end{table}

\textbf{The verbalized exception is Sem-ECE's audit value.}
The 3 cells where Ver beats Sem (Anthropic on SimpleQA/PopQA,
Mistral on PopQA) share the same pattern: high agreement
consistency yet low accuracy, so $\mathbb{E}_q[c_q^\star] \gg \bar a$
(e.g.\ Anthropic SimpleQA: $\bar a = 0.482$ vs.\ Sem$_1$ mean
confidence $0.835$). Verbalized self-moderation happens to land
closer to $\bar a$ in these cells, but a practitioner relying on
Ver alone has no external reference to detect such miscalibration.
Sem-ECE supplies that reference, depending only on sample
frequencies and external accuracy judgments and placing all
providers on the same footing without trusting any model's
self-report; it complements Ver rather than replacing it.

\begin{figure}[t]
\centering
\includegraphics[width=\linewidth]{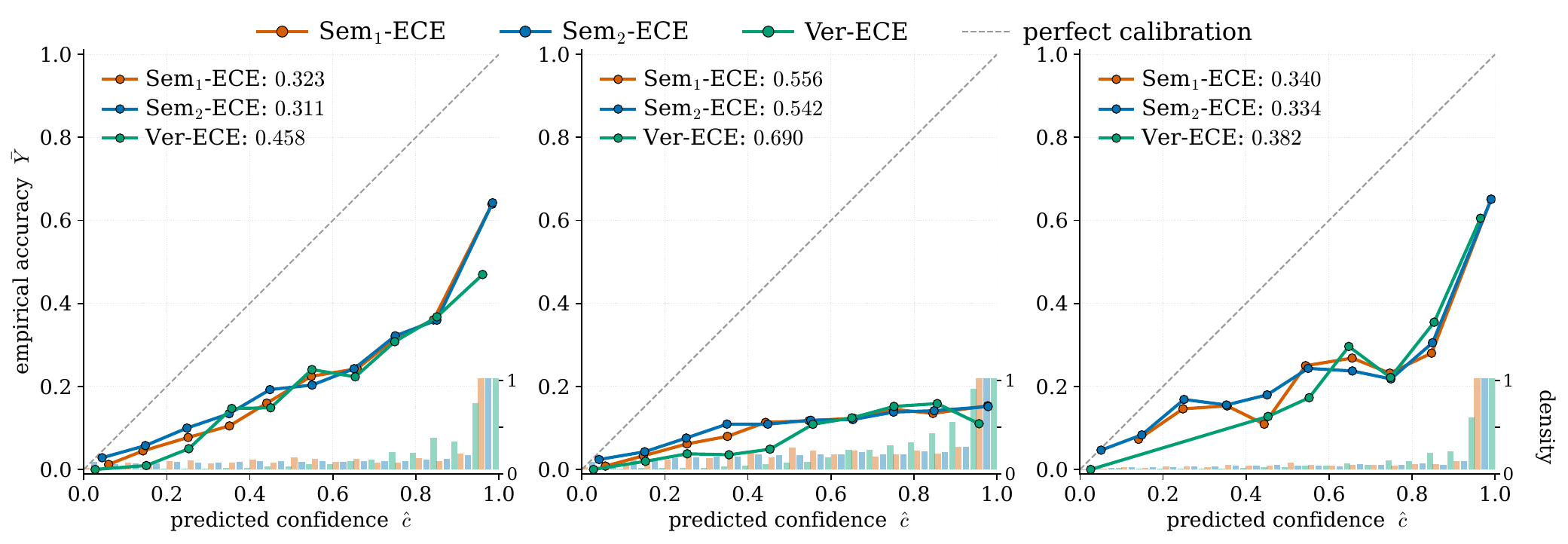}
\vspace{-0.5cm}
\caption{Reliability diagrams pooled across models on SimpleQA
(left), HLE (middle), PopQA (right). Pooled ECE values appear in
each legend; the dashed diagonal is perfect calibration.
$\text{Sem}_2$ achieves the lowest pooled ECE on every benchmark.}
\label{fig:reliability}
\end{figure}
\section{Conclusion}
\label{sec:conclusion}
We introduced \textbf{Sem-ECE}, a calibration evaluation framework for black-box open-ended QA that turns repeated free-form generations into semantic answer classes and uses the resulting frequencies as confidence. Within this framework, we studied two estimators: Sem$_1$-ECE, the standard same-sample self-consistency score, and Sem$_2$-ECE, a held-out variant that separates answer selection from confidence evaluation. We proved both are asymptotically unbiased, and further showed that they agree on easy questions but diverge on hard ones with Sem$_2$ achieving strictly smaller calibration error, so the Sem$_1$--Sem$_2$ gap also serves as a diagnostic for question difficulty. Experiments on three challenging open-ended QA benchmarks across five leading commercial LLMs confirm these predictions and show that Sem-ECE outperforms verbalized confidence and existing sampling-based methods, while complementing logit-based evaluation when internal probabilities are unavailable. 
\section*{Acknowledgment}
This work was supported in part by NIH grant P30 AG073105 and a PSOM AI2D Seeding Project.

\bibliographystyle{unsrtnat}
\bibliography{references}

@book{kagel2002common,
  title     = {Common Value Auctions and the Winner's Curse},
  author    = {Kagel, John H. and Levin, Dan},
  year      = {2002},
  publisher = {Princeton University Press},
  address   = {Princeton, NJ}
}

@inproceedings{guo2017calibration,
  title     = {On Calibration of Modern Neural Networks},
  author    = {Guo, Chuan and Pleiss, Geoff and Sun, Yu and Weinberger, Kilian Q.},
  booktitle = {Proceedings of the 34th International Conference on Machine Learning},
  series    = {Proceedings of Machine Learning Research},
  volume    = {70},
  pages     = {1321--1330},
  year      = {2017},
  publisher = {PMLR}
}

@article{lin2022teaching,
  title   = {Teaching Models to Express Their Uncertainty in Words},
  author  = {Lin, Stephanie C. and Hilton, Jacob and Evans, Owain},
  journal = {Transactions on Machine Learning Research},
  year    = {2022},
  url     = {https://openreview.net/forum?id=8s8K2UZGTZ}
}

@misc{kadavath2022language,
  title         = {Language Models (Mostly) Know What They Know},
  author        = {Kadavath, Saurav and Conerly, Tom and Askell, Amanda and Henighan, Tom and Drain, Dawn and Perez, Ethan and Schiefer, Nicholas and Hatfield-Dodds, Zac and DasSarma, Nova and Tran-Johnson, Eli and Johnston, Scott and El-Showk, Sheer and Jones, Andy and Elhage, Nelson and Hume, Tristan and Chen, Anna and Bai, Yuntao and Bowman, Sam and Fort, Stanislav and Ganguli, Deep and Hernandez, Danny and Jacobson, Josh and Kernion, Jackson and Kravec, Shauna and Lovitt, Liane and Ndousse, Kamal and Olsson, Catherine and Ringer, Sam and Amodei, Dario and Brown, Tom and Clark, Jack and Joseph, Nicholas and Mann, Ben and McCandlish, Sam and Olah, Chris and Kaplan, Jared},
  year          = {2022},
  eprint        = {2207.05221},
  archivePrefix = {arXiv},
  primaryClass  = {cs.CL}
}

@inproceedings{tian2023just,
  title     = {Just Ask for Calibration: Strategies for Eliciting Calibrated Confidence Scores from Language Models Fine-Tuned with Human Feedback},
  author    = {Tian, Katherine and Mitchell, Eric and Zhou, Allan and Sharma, Archit and Rafailov, Rafael and Yao, Huaxiu and Finn, Chelsea and Manning, Christopher D.},
  booktitle = {Proceedings of the 2023 Conference on Empirical Methods in Natural Language Processing},
  pages     = {5433--5442},
  year      = {2023},
  address   = {Singapore},
  publisher = {Association for Computational Linguistics},
  doi       = {10.18653/v1/2023.emnlp-main.330}
}

@inproceedings{lyu2025calibrating,
  title     = {Calibrating Large Language Models with Sample Consistency},
  author    = {Lyu, Qing and Shridhar, Kumar and Malaviya, Chaitanya and Zhang, Li and Elazar, Yanai and Tandon, Niket and Apidianaki, Marianna and Sachan, Mrinmaya and Callison-Burch, Chris},
  booktitle = {Proceedings of the AAAI Conference on Artificial Intelligence},
  volume    = {39},
  number    = {18},
  pages     = {19260--19268},
  year      = {2025},
  doi       = {10.1609/aaai.v39i18.34120}
}

@misc{wei2024simpleqa,
  title         = {Measuring Short-Form Factuality in Large Language Models},
  author        = {Wei, Jason and Nguyen, Karina and Chung, Hyung Won and Jiao, Yunxin Joy and Papay, Spencer and Glaese, Amelia and Schulman, John and Fedus, William},
  year          = {2024},
  eprint        = {2411.04368},
  archivePrefix = {arXiv},
  primaryClass  = {cs.CL}
}

@inproceedings{wang2023selfconsistency,
  title     = {Self-Consistency Improves Chain of Thought Reasoning in Language Models},
  author    = {Wang, Xuezhi and Wei, Jason and Schuurmans, Dale and Le, Quoc V. and Chi, Ed H. and Narang, Sharan and Chowdhery, Aakanksha and Zhou, Denny},
  booktitle = {International Conference on Learning Representations},
  year      = {2023},
  url       = {https://openreview.net/forum?id=1PL1NIMMrw}
}

@article{brier1950verification,
  title   = {Verification of Forecasts Expressed in Terms of Probability},
  author  = {Brier, Glenn W.},
  journal = {Monthly Weather Review},
  volume  = {78},
  number  = {1},
  pages   = {1--3},
  year    = {1950}
}

@inproceedings{naeini2015obtaining,
  title     = {Obtaining Well Calibrated Probabilities Using Bayesian Binning},
  author    = {Naeini, Mahdi Pakdaman and Cooper, Gregory F. and Hauskrecht, Milos},
  booktitle = {Proceedings of the Twenty-Ninth AAAI Conference on Artificial Intelligence},
  pages     = {2901--2907},
  year      = {2015}
}

@article{mielke2022reducing,
  title   = {Reducing Conversational Agents' Overconfidence Through Linguistic Calibration},
  author  = {Mielke, Sabrina J. and Szlam, Arthur and Dinan, Emily and Boureau, Y-Lan},
  journal = {Transactions of the Association for Computational Linguistics},
  volume  = {10},
  pages   = {857--872},
  year    = {2022},
  doi     = {10.1162/tacl_a_00494}
}

@inproceedings{kuhn2023semantic,
  title     = {Semantic Uncertainty: Linguistic Invariances for Uncertainty Estimation in Natural Language Generation},
  author    = {Kuhn, Lorenz and Gal, Yarin and Farquhar, Sebastian},
  booktitle = {International Conference on Learning Representations},
  year      = {2023},
  url       = {https://openreview.net/forum?id=VD-AYtP0dve}
}

@article{farquhar2024detecting,
  title   = {Detecting Hallucinations in Large Language Models Using Semantic Entropy},
  author  = {Farquhar, Sebastian and Kossen, Jannik and Kuhn, Lorenz and Gal, Yarin},
  journal = {Nature},
  volume  = {630},
  pages   = {625--630},
  year    = {2024},
  doi     = {10.1038/s41586-024-07421-0}
}

@incollection{platt1999probabilistic,
  title     = {Probabilistic Outputs for Support Vector Machines and Comparisons to Regularized Likelihood Methods},
  author    = {Platt, John C.},
  booktitle = {Advances in Large Margin Classifiers},
  pages     = {61--74},
  year      = {1999},
  publisher = {MIT Press}
}

@inproceedings{kull2019beyond,
  title     = {Beyond Temperature Scaling: Obtaining Well-Calibrated Multiclass Probabilities with Dirichlet Calibration},
  author    = {Kull, Meelis and Perello-Nieto, Miquel and K{\"a}ngsepp, Markus and Silva Filho, Telmo and Song, Hao and Flach, Peter},
  booktitle = {Advances in Neural Information Processing Systems},
  volume    = {32},
  year      = {2019}
}

@inproceedings{kumar2019verified,
  title     = {Verified Uncertainty Calibration},
  author    = {Kumar, Ananya and Liang, Percy and Ma, Tengyu},
  booktitle = {Advances in Neural Information Processing Systems},
  volume    = {32},
  year      = {2019}
}

@inproceedings{xiao2025restoring,
  title     = {Restoring Calibration for Aligned Large Language Models: A Calibration-Aware Fine-Tuning Approach},
  author    = {Xiao, Jiancong and Hou, Bojian and Wang, Zhanliang and Jin, Ruochen and Long, Qi and Su, Weijie J. and Shen, Li},
  booktitle = {Proceedings of the 42nd International Conference on Machine Learning},
  series    = {Proceedings of Machine Learning Research},
  volume    = {267},
  pages     = {68364--68390},
  year      = {2025},
  publisher = {PMLR},
  url       = {https://proceedings.mlr.press/v267/xiao25b.html}
}

@misc{phan2025hle,
  title         = {Humanity's Last Exam},
  author        = {Phan, Long and Gatti, Alice and Han, Ziwen and Li, Nathaniel
                   and Hu, Josephina and Zhang, Hugh and Shi, Sean and others},
  year          = {2025},
  eprint        = {2501.14249},
  archivePrefix = {arXiv},
  primaryClass  = {cs.LG},
  url           = {https://arxiv.org/abs/2501.14249}
}

@inproceedings{mallen2023popqa,
  title     = {When Not to Trust Language Models: Investigating Effectiveness of
               Parametric and Non-Parametric Memories},
  author    = {Mallen, Alex and Asai, Akari and Zhong, Victor and Das, Rajarshi
               and Khashabi, Daniel and Hajishirzi, Hannaneh},
  booktitle = {Proceedings of the 61st Annual Meeting of the Association for
               Computational Linguistics (Volume 1: Long Papers)},
  pages     = {9802--9822},
  year      = {2023},
  address   = {Toronto, Canada},
  publisher = {Association for Computational Linguistics},
  doi       = {10.18653/v1/2023.acl-long.546}
}

@inproceedings{xiong2024can,
  title     = {Can {LLM}s Express Their Uncertainty? {A}n Empirical Evaluation of
               Confidence Elicitation in {LLM}s},
  author    = {Xiong, Miao and Hu, Zhiyuan and Lu, Xinyang and Li, Yifei
               and Fu, Jie and He, Junxian and Hooi, Bryan},
  booktitle = {The Twelfth International Conference on Learning Representations},
  year      = {2024},
  url       = {https://openreview.net/forum?id=gjeQKFxFpZ}
}

@misc{openai2025gpt5,
  author       = {{OpenAI}},
  title        = {{GPT-5} Models},
  year         = {2025},
  howpublished = {\url{https://platform.openai.com/docs/models}},
  note         = {Accessed: 2026-01}
}

@misc{anthropic2025claude,
  author       = {{Anthropic}},
  title        = {{Claude} Models},
  year         = {2025},
  howpublished = {\url{https://docs.claude.com/en/docs/about-claude/models}},
  note         = {Accessed: 2026-01}
}

@misc{google2025gemini,
  author       = {{Google DeepMind}},
  title        = {{Gemini} API Models},
  year         = {2025},
  howpublished = {\url{https://ai.google.dev/gemini-api/docs/models}},
  note         = {Accessed: 2026-01}
}

@misc{xai2025grok,
  author       = {{xAI}},
  title        = {{Grok} Models},
  year         = {2025},
  howpublished = {\url{https://docs.x.ai/docs/models}},
  note         = {Accessed: 2026-01}
}

@misc{mistral2024large,
  author       = {{Mistral AI}},
  title        = {{Mistral Large}},
  year         = {2024},
  howpublished = {\url{https://docs.mistral.ai/getting-started/models/models_overview/}},
  note         = {Accessed: 2026-01}
}

\appendix

\section{Proofs}\label{app:proofs}

\subsection{Proof of \texorpdfstring{\Cref{thm:bias}}{Theorem 3.1}}\label{app:bias}

Fix $q$ with $\Delta_q > 0$. Write $D_k := \hat\pi_N(k) - \pi_{q,k}$ for
the centered empirical PMF on $N$, and similarly $D'_k$ for $E$.

\paragraph{Hoeffding bound.}
Each $\hat\pi_N(k) - \pi_{q,k}$ is a sample mean of $[0,1]$-bounded
i.i.d.\ Bernoulli variables, hence $1/(2\sqrt n)$-sub-Gaussian by
Hoeffding's lemma. The standard sub-Gaussian maximum bound gives
\begin{equation}\label{eq:max-subg}
\mathbb{E}\!\big[\max_k |D_k|\big]
\;\le\; \frac{1}{2\sqrt n}\sqrt{2\log(2K_q)}
\;=\; \sqrt{\tfrac{\log(2K_q)}{2n}}.
\end{equation}

\emph{For $i = 1$:} the map $(x_1,\dots,x_K) \mapsto \max_k x_k$ is
$1$-Lipschitz in $\ell_\infty$, so
$|\hat c_1 - c^\star_q| \le \max_k |D_k|$. Taking expectations and using
\eqref{eq:max-subg} bounds $|\mathbb{E}[\hat c_1 - c^\star_q \mid q]|$ by
$\sqrt{\log(2K_q)/(2n)}$.

\emph{For $i = 2$:} by conditional unbiasedness
\eqref{eq:c2-cond-unbiased},
$\mathbb{E}[\hat c_2 \mid q] = \mathbb{E}[\pi_{q,\hat z_N} \mid q]$. Since
$\hat\pi_N(\hat z_N) \ge \hat\pi_N(z^\star_q)$ by definition of $\hat z_N$,
\[
\pi_{q,\hat z_N} \ge \hat\pi_N(\hat z_N) - \max_k|D_k|
\ge \hat\pi_N(z^\star_q) - \max_k|D_k|
\ge p_q - 2\max_k|D_k|,
\]
so $0 \le c^\star_q - \pi_{q,\hat z_N} \le 2\max_k|D_k|$. Hence
$|\mathbb{E}[\hat c_2 - c^\star_q \mid q]| \le 2\sqrt{\log(2K_q)/(2n)}$,
which absorbs into the same Hoeffding rate up to a universal constant.

\paragraph{Bernstein bound.}
For both $i \in \{1,2\}$, we show
$|\mathbb{E}[\hat c_i - c^\star_q \mid q]| \le \Pr(\hat z_N \ne z^\star_q \mid q)$.

\emph{For $i = 1$:} the increment representation
$\hat c_1 = \hat\pi_N(z^\star_q) + (\hat\pi_N(\hat z_N) - \hat\pi_N(z^\star_q))$
combined with $\mathbb{E}[\hat\pi_N(z^\star_q)] = c^\star_q$ gives
\[
0 \le \mathbb{E}[\hat c_1 - c^\star_q \mid q]
= \mathbb{E}\!\big[(\hat\pi_N(\hat z_N) - \hat\pi_N(z^\star_q))\,\mathbf{1}_{\hat z_N \ne z^\star_q}\big]
\le \Pr(\hat z_N \ne z^\star_q \mid q),
\]
since the increment lies in $[0, 1]$.

\emph{For $i = 2$:}
$\mathbb{E}[\hat c_2 - c^\star_q \mid q] = -\mathbb{E}[(c^\star_q - \pi_{q,\hat z_N})\mathbf{1}_{\hat z_N \ne z^\star_q}]$
and $0 \le c^\star_q - \pi_{q,\hat z_N} \le 1$, so the same bound applies.

By union bound,
$\Pr(\hat z_N \ne z^\star_q \mid q) \le \sum_{k \ne z^\star_q}\Pr(\hat\pi_N(k) \ge \hat\pi_N(z^\star_q) \mid q)$.
For each $k \ne z^\star_q$, define
$\xi^{(k)}_i := \mathbf{1}_{Z_i = z^\star_q} - \mathbf{1}_{Z_i = k}$, so
$\mathbb{E}[\xi^{(k)}] = \pi_{q,z^\star_q} - \pi_{q,k} \ge \Delta_q$,
$|\xi^{(k)}| \le 1$, and
$\mathrm{Var}(\xi^{(k)}) \le p_q + \pi_{q,k} \le 2p_q$. Bernstein's inequality
yields
\[
\Pr\!\Big(\tfrac{1}{n}\sum_{i=1}^n \xi^{(k)}_i \le 0 \,\Big|\, q\Big)
\;\le\; \exp\!\left(-\frac{n\Delta_q^2/2}{2p_q + \Delta_q/3}\right)
\;\le\; \exp\!\left(-\frac{n\Delta_q^2}{4p_q + 1}\right).
\]
Summing over the $K_q-1$ runners-up gives the Bernstein-type bound, which
(after absorbing the small constant $\Delta_q/3 \le 1/3$ in the
denominator) is dominated by $(K_q-1)\exp(-n\Delta_q^2/(2p_q))$ up to
universal constants. Combining with the Hoeffding bound proves
\eqref{eq:confidence-bound}. \qed

\subsection{Proof of Theorem~\ref{thm:ece-bound}}
\begin{proof}
Using the equivalent representation of fixed-bin ECE,
\[
\operatorname{ECE}(c,a)
=
\sum_{\ell=1}^L
\left|
\mathbb E\left[
(a-c)\mathbf 1\{C\in\mathcal I_\ell\}
\right]
\right|,
\]
the reverse triangle inequality gives
\[
\begin{aligned}
&
\left|
\operatorname{ECE}(\hat c,\hat a)
-
\operatorname{ECE}(c^\star,a^\star)
\right| \\
&\le
\sum_{\ell=1}^L
\mathbb E\left|
(\hat a-\hat c)\mathbf 1\{\hat c\in\mathcal I_\ell\}
-
(a^\star-c^\star)\mathbf 1\{c^\star\in\mathcal I_\ell\}
\right|.
\end{aligned}
\]
Split according to whether $\hat c$ and $c^\star$ fall in the same bin. On the event
$\operatorname{bin}(\hat c)=\operatorname{bin}(c^\star)$, only one bin contributes, and
\[
|(\hat a-\hat c)-(a^\star-c^\star)|
\le
|\hat a-a^\star|+|\hat c-c^\star|.
\]
On the event $\operatorname{bin}(\hat c)\neq\operatorname{bin}(c^\star)$, at most two bins
contribute, and each contribution is bounded by one. Therefore
\[
\left|
\operatorname{ECE}(\hat c,\hat a)
-
\operatorname{ECE}(c^\star,a^\star)
\right|
\le
\mathbb E|\hat a-a^\star|
+
\mathbb E|\hat c-c^\star|
+
2\mathbb P\{\operatorname{bin}(\hat c)\neq\operatorname{bin}(c^\star)\}.
\]
Since $a_n,a^\star\in\{0,1\}$,
\[
\mathbb E|\hat a-a^\star|=\mathbb P(\hat a\neq a^\star)=\delta_n,
\]
and by definition
\[
\mathbb E|\hat c-c^\star|=\varepsilon_n.
\]
It remains to bound the bin-crossing probability. If
$\operatorname{bin}(\hat c)\neq\operatorname{bin}(c^\star)$, then either
$|\hat c-c^\star|>\eta$ or $c^\star$ lies within distance $\eta$ of a bin boundary.
Hence
\[
\mathbb P\{\operatorname{bin}(\hat c)\neq\operatorname{bin}(c^\star)\}
\le
\mathbb P\{|\hat c-c^\star|>\eta\}
+
\mathbb P\{\operatorname{dist}(c^\star,\mathcal T)\le \eta\}.
\]
By Markov's inequality,
\[
\mathbb P\{|\hat c-c^\star|>\eta\}
\le
\frac{\varepsilon_n}{\eta}.
\]
This proves the first claim.

If $c^\star$ has density bounded by $M$, then the $\eta$-neighborhood of the
$L-1$ bin boundaries has total length at most $2(L-1)\eta$, so
\[
\mathbb P\{\operatorname{dist}(c^\star,\mathcal T)\le \eta\}
\le
2M(L-1)\eta.
\]
Therefore
\[
\left|
\operatorname{ECE}(\hat c,\hat a)
-
\operatorname{ECE}(c^\star,a^\star)
\right|
\le
\delta_n+\varepsilon_n
+
2\left\{
\frac{\varepsilon_n}{\eta}
+
2M(L-1)\eta
\right\}.
\]
Optimizing over $\eta$ gives
\[
2\left\{
\frac{\varepsilon_n}{\eta}
+
2M(L-1)\eta
\right\}
\le
4\sqrt{2M(L-1)\varepsilon_n}.
\]
This completes the proof.
\end{proof}
\subsection{Proof of Theorem~\ref{boundc1}}
Before we provide the proof of Theorem \ref{boundc1}, we first record a standard pairwise comparison bound.  Throughout this
section, fix a question $q$ and, without loss of generality, relabel the
population semantic mode as
\[
z_q^\star=1,\qquad c_q^\star=\pi_{q,1}.
\]
Let
\[
\pi_{q,(2)}:=\max_{k\neq 1}\pi_{q,k},\qquad
\Delta_q:=\pi_{q,1}-\pi_{q,(2)},\qquad
p_q:=\pi_{q,1}+\pi_{q,(2)}.
\]
We assume $\Delta_q>0$ for $Q$-almost every $q$.

\begin{lemma}[Selection error]
For every fixed $q$ with $\Delta_q>0$,
\[
\mathbb P(\hat z_N\neq z_q^\star\mid q)
\le
(K_q-1)\exp\left(-\frac{n\Delta_q^2}{2p_q}\right).
\]
\end{lemma}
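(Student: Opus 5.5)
We use a union bound over runners-up and an exact Chernoff computation for each pairwise comparison. The Bernstein route in the proof of \Cref{thm:bias} is too lossy here: it carries a linear term $\Delta_q/3$ in the denominator and the variance bound $2p_q$. We will avoid both by optimizing the Chernoff exponent in closed form.

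\emph{Step 1 (union bound).} With the relabeling $z_q^\star=1$, the deterministic tie-breaking rule gives $\{\hat z_N\neq 1\}\subseteq\bigcup_{k\neq 1}\{\hat\pi_N(k)\ge\hat\pi_N(1)\}$. So it suffices to show, for each $k\neq1$,
\[
\mathbb P\bigl(\hat\pi_N(k)\ge \hat\pi_N(1)\mid q\bigr)\le \exp\!\Big(-\frac{n\Delta_q^2}{2p_q}\Big),
\]
and then sum over the $K_q-1$ classes $k\neq 1$.

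\emph{Step 2 (exact Chernoff bound).} Set $\xi_i:=\mathbf 1\{Z_i=1\}-\mathbf 1\{Z_i=k\}\in\{-1,0,1\}$. The target event is $\{\sum_{i\in N}\xi_i\le 0\}$. For $t\ge0$, Markov's inequality applied to $e^{-t\sum\xi_i}$ gives the bound $\bigl(\mathbb E e^{-t\xi}\bigr)^n$, where
\[
\mathbb E e^{-t\xi}=1-\pi_{q,1}(1-e^{-t})-\pi_{q,k}(1-e^{t}).
\]
This is minimized at $e^{t}=\sqrt{\pi_{q,1}/\pi_{q,k}}$, which satisfies $t\ge 0$ since $\pi_{q,1}>\pi_{q,k}$. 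The minimum value is $1-(\sqrt{\pi_{q,1}}-\sqrt{\pi_{q,k}})^2$. If $\pi_{q,k}=0$ the event forces $\hat\pi_N(1)=0$, whose probability is $(1-\pi_{q,1})^n$, and the same formula holds. Using $1-x\le e^{-x}$, we get
\[
\mathbb P\Bigl(\sum_{i\in N}\xi_i\le0\Bigr)\le \exp\!\bigl(-n(\sqrt{\pi_{q,1}}-\sqrt{\pi_{q,k}})^2\bigr).
\]

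\emph{Step 3 (lower-bounding the exponent).} This is the only step needing care, and it is elementary. Write
\[
(\sqrt a-\sqrt b)^2=\frac{(a-b)^2}{(\sqrt a+\sqrt b)^2}
\]
and use $(\sqrt a+\sqrt b)^2\le 2(a+b)$ with $a=\pi_{q,1}$, $b=\pi_{q,k}$. Since $\pi_{q,1}-\pi_{q,k}\ge\Delta_q>0$ and $\pi_{q,1}+\pi_{q,k}\le p_q$ (because $\pi_{q,k}\le\pi_{q,(2)}$), this gives
\[
(\sqrt{\pi_{q,1}}-\sqrt{\pi_{q,k}})^2\ge \frac{\Delta_q^2}{2p_q}.
\]
Substituting into Step 2 and summing as in Step 1 proves the lemma.
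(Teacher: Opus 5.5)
Your proof is correct, and its skeleton is the paper's: relabel the mode as class $1$, take a union bound over $k\neq 1$, and control each event $\{\hat\pi_N(k)\ge\hat\pi_N(1)\}$ through the three-valued statistic $\xi_i^{(k)}=\mathbf 1\{Z_i=1\}-\mathbf 1\{Z_i=k\}$.

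Where you differ is the concentration step. The paper records $\mathbb E\,\xi^{(k)}\ge\Delta_q$ and $\operatorname{Var}(\xi^{(k)})\le\mathbb E(\xi^{(k)})^2=\pi_{q,1}+\pi_{q,k}\le p_q$. It then invokes an unspecified ``Bernstein--Chernoff bound'' to reach $\exp(-n\Delta_q^2/(2p_q))$. The textbook Bernstein inequality does not give this constant, because it carries a linear term in the denominator. The paper's own Bernstein computation in the proof of \Cref{thm:bias} only reaches $\exp(-n\Delta_q^2/(4p_q+1))$, which is weaker.

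You instead optimize the moment generating function exactly, obtaining $\bigl(1-(\sqrt{\pi_{q,1}}-\sqrt{\pi_{q,k}})^2\bigr)^n$. The inequality $(\sqrt a+\sqrt b)^2\le 2(a+b)$ then converts the exponent into exactly $\Delta_q^2/(2p_q)$, with the second moment $\pi_{q,1}+\pi_{q,k}$ serving as the variance proxy.

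The paper's route is shorter and reads as a generic variance-based tail bound, but as written it leaves its key inequality unjustified. Yours costs one explicit calculation and in return proves the stated constant rigorously; before the final relaxation it even gives a slightly sharper per-pair bound. Your check that the optimizing $t$ is nonnegative and your separate treatment of $\pi_{q,k}=0$ are exactly the details that make the computation airtight.
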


\begin{proof}
For any $k\neq 1$, the event $\hat z_N=k$ implies
\[
\hat\pi_N(k)\geq \hat\pi_N(1).
\]
Equivalently,
\[
\frac1n\sum_{i\in N}\xi_i^{(k)}\leq 0,
\qquad
\xi_i^{(k)}:=\mathbf 1\{Z_i=1\}-\mathbf 1\{Z_i=k\}.
\]
Now
\[
\mathbb E[\xi_i^{(k)}\mid q]
=
\pi_{q,1}-\pi_{q,k}
\geq \Delta_q,
\]
and the random variable $\xi_i^{(k)}$ is supported on $\{-1,0,1\}$ with
variance controlled by the total top-versus-$k$ probability mass:
\[
\operatorname{Var}(\xi_i^{(k)}\mid q)
\leq
\mathbb E[(\xi_i^{(k)})^2\mid q]
=
\pi_{q,1}+\pi_{q,k}
\leq p_q .
\]
Applying the Bernstein--Chernoff bound for this three-valued comparison
statistic gives
\[
\mathbb P\left(
\hat\pi_N(k)\geq \hat\pi_N(1)\mid q
\right)
\leq
\exp\left(-\frac{n\Delta_q^2}{2p_q}\right).
\]
Taking a union bound over all $k\neq 1$ yields
\[
\mathbb P(\hat z_N\neq z_q^\star\mid q)
\leq
\sum_{k\neq 1}
\mathbb P\left(
\hat\pi_N(k)\geq \hat\pi_N(1)\mid q
\right)
\leq
(K_q-1)\exp\left(-\frac{n\Delta_q^2}{2p_q}\right).
\]
\end{proof}

\begin{proof}[Proof of Theorem~\ref{boundc1}]
Let
\[
\hat c=\hat c_1=\max_{k\in\mathcal Z_q}\hat\pi_N(k),
\qquad
\hat a=Y_q(\hat z_N),
\qquad
a_q^\star=Y_q(z_q^\star).
\]
We first bound the confidence error
\[
\varepsilon_n=\mathbb E|\hat c_1-c_q^\star|.
\]

For fixed $q$, write
\[
D_k:=\hat\pi_N(k)-\pi_{q,k}.
\]
Since the maximum map is $1$-Lipschitz with respect to the $\ell_\infty$
norm,
\[
|\hat c_1-c_q^\star|
=
\left|
\max_k \hat\pi_N(k)-\max_k\pi_{q,k}
\right|
\leq
\max_k |D_k|.
\]
Each $D_k$ is the centered average of $n$ Bernoulli variables and is
$1/(2\sqrt n)$-sub-Gaussian. Therefore,
\[
\mathbb E\left[\max_k |D_k|\mid q\right]
\leq
\sqrt{\frac{\log(2K_q)}{2n}}.
\]
Thus
\[
\mathbb E\left[|\hat c_1-c_q^\star|\mid q\right]
\leq
\sqrt{\frac{\log(2K_q)}{2n}}.
\]

We next derive a margin-sensitive bound.  On the event
$\{\hat z_N=z_q^\star\}$, we have
\[
\hat c_1=\hat\pi_N(z_q^\star)=\hat\pi_N(1),
\]
and hence
\[
|\hat c_1-c_q^\star|
=
|\hat\pi_N(1)-\pi_{q,1}|.
\]
On the complement $\{\hat z_N\neq z_q^\star\}$, the trivial bound
$|\hat c_1-c_q^\star|\leq 1$ gives
\[
\mathbb E\left[|\hat c_1-c_q^\star|\mid q\right]
\leq
\mathbb E\left[|\hat\pi_N(1)-\pi_{q,1}|\mid q\right]
+
\mathbb P(\hat z_N\neq z_q^\star\mid q).
\]
Since $\hat\pi_N(1)$ is the average of $n$ Bernoulli variables with
success probability $\pi_{q,1}$,
\[
\mathbb E\left[|\hat\pi_N(1)-\pi_{q,1}|\mid q\right]
\leq
\sqrt{\operatorname{Var}(\hat\pi_N(1)\mid q)}
=
\sqrt{\frac{\pi_{q,1}(1-\pi_{q,1})}{n}}.
\]
Combining this with the selection-error lemma yields
\[
\mathbb E\left[|\hat c_1-c_q^\star|\mid q\right]
\leq
\sqrt{\frac{\pi_{q,1}(1-\pi_{q,1})}{n}}
+
(K_q-1)\exp\left(-\frac{n\Delta_q^2}{2p_q}\right).
\]
Together with the uniform Hoeffding bound, we obtain
\[
\mathbb E\left[|\hat c_1-c_q^\star|\mid q\right]
\leq
\min\left\{
\sqrt{\frac{\log(2K_q)}{2n}},
\,
\sqrt{\frac{\pi_{q,1}(1-\pi_{q,1})}{n}}
+
(K_q-1)\exp\left(-\frac{n\Delta_q^2}{2p_q}\right)
\right\}.
\]
Taking expectation over $q\sim Q$ gives
\[
\varepsilon_n
\leq
\mathbb E_q
\left[
\min\left\{
\sqrt{\frac{\log(2K_q)}{2n}},
\,
\sqrt{\frac{\pi_{q,1}(1-\pi_{q,1})}{n}}
+
(K_q-1)\exp\left(-\frac{n\Delta_q^2}{2p_q}\right)
\right\}
\right].
\]

It remains to bound the selection error
\[
\delta_n=\mathbb P(\hat a\neq a_q^\star).
\]
Since $\hat a=Y_q(\hat z_N)$ and $a_q^\star=Y_q(z_q^\star)$,
\[
\{\hat a\neq a_q^\star\}
\subseteq
\{\hat z_N\neq z_q^\star\}.
\]
Therefore, by the selection-error lemma,
\[
\mathbb P(\hat a\neq a_q^\star\mid q)
\leq
(K_q-1)\exp\left(-\frac{n\Delta_q^2}{2p_q}\right).
\]
Taking expectation over $q\sim Q$ gives
\[
\delta_n
\leq
\mathbb E_q
\left[
(K_q-1)\exp\left(-\frac{n\Delta_q^2}{2p_q}\right)
\right].
\]
This proves Theorem~\ref{boundc1}.
\end{proof}
\subsection{Proof of Theorem~\ref{thm:bound-c2}}
\begin{proof}
Let
\[
\hat c=\hat c_2=\hat\pi_E(\hat z_N),
\qquad
\hat a=Y_q(\hat z_N),
\qquad
a_q^\star=Y_q(z_q^\star).
\]
Again relabel $z_q^\star=1$, so that $c_q^\star=\pi_{q,1}$.

For fixed $q$, decompose
\[
|\hat c_2-c_q^\star|
=
|\hat\pi_E(\hat z_N)-\pi_{q,1}|
\leq
|\hat\pi_E(\hat z_N)-\pi_{q,\hat z_N}|
+
|\pi_{q,\hat z_N}-\pi_{q,1}|.
\]
We control the two terms separately.

For the first term, condition on $q$ and $\hat z_N$.  Since the evaluation
block $E$ is independent of the selection block $N$,
\[
\hat\pi_E(\hat z_N)
=
\frac1m\sum_{i\in E}\mathbf 1\{Z_i=\hat z_N\}
\]
is, conditionally on $(q,\hat z_N)$, the average of $m$ Bernoulli variables
with success probability $\pi_{q,\hat z_N}$. Hence
\[
\mathbb E\left[
|\hat\pi_E(\hat z_N)-\pi_{q,\hat z_N}|
\mid q,\hat z_N
\right]
\leq
\sqrt{\frac{\pi_{q,\hat z_N}(1-\pi_{q,\hat z_N})}{m}}.
\]
For every $k$, one has
\[
\pi_{q,k}(1-\pi_{q,k})
\leq
\pi_{q,1}(1-\pi_{q,1}).
\]
Indeed, this is immediate for $k=1$.  For $k\neq 1$, since
$\pi_{q,k}\leq \pi_{q,1}$ and $\pi_{q,k}\leq 1-\pi_{q,1}$, the Bernoulli
variance of class $k$ is no larger than that of the modal class.  Therefore,
\[
\mathbb E\left[
|\hat\pi_E(\hat z_N)-\pi_{q,\hat z_N}|
\mid q
\right]
\leq
\sqrt{\frac{\pi_{q,1}(1-\pi_{q,1})}{m}}.
\]

For the second term,
\[
|\pi_{q,\hat z_N}-\pi_{q,1}|
=
0
\quad\text{on } \{\hat z_N=z_q^\star\},
\]
and it is at most $1$ otherwise. Hence
\[
\mathbb E\left[
|\pi_{q,\hat z_N}-\pi_{q,1}|
\mid q
\right]
\leq
\mathbb P(\hat z_N\neq z_q^\star\mid q).
\]
Using the selection-error lemma,
\[
\mathbb E\left[
|\pi_{q,\hat z_N}-\pi_{q,1}|
\mid q
\right]
\leq
(K_q-1)\exp\left(-\frac{n\Delta_q^2}{2p_q}\right).
\]
Combining the two bounds gives
\[
\mathbb E\left[|\hat c_2-c_q^\star|\mid q\right]
\leq
\sqrt{\frac{\pi_{q,1}(1-\pi_{q,1})}{m}}
+
(K_q-1)\exp\left(-\frac{n\Delta_q^2}{2p_q}\right).
\]
Taking expectation over $q\sim Q$ yields
\[
\varepsilon_n
\leq
\mathbb E_q
\left[
\sqrt{\frac{\pi_{q,1}(1-\pi_{q,1})}{m}}
+
(K_q-1)\exp\left(-\frac{n\Delta_q^2}{2p_q}\right)
\right].
\]

The selection error is exactly the same as in Theorem 5.3, because Sem1 and
Sem2 use the same selected answer $\hat z_N$.  Namely,
\[
\{\hat a\neq a_q^\star\}
\subseteq
\{\hat z_N\neq z_q^\star\},
\]
so
\[
\delta_n
\leq
\mathbb E_q
\left[
(K_q-1)\exp\left(-\frac{n\Delta_q^2}{2p_q}\right)
\right].
\]
This completes the proof.
\end{proof}
\subsection{Proof of the bias expansions \eqref{eq:bias-expansion}}\label{app:bias-expansion}

Fix $q \in \mathcal{Q}_{\mathrm{low}}$ and let
$z^{(2)}_q := \argmax_{k \ne z^\star_q}\pi_{q,k}$ denote the runner-up
class. Throughout we work in the local regime $\Delta_q = O(n^{-1/2})$ so
that $\tilde m_q$ is bounded.

\paragraph{Reduction to the top-two classes.}
Let $\mathcal{E}_q := \{\hat z_N \in \{z^\star_q, z^{(2)}_q\}\}$. By the
Bernstein argument in \Cref{app:bias} applied to each runner-up $j \notin
\{z^\star_q, z^{(2)}_q\}$ (which has gap $\Delta_q^{(j)} \ge \Delta_q$),
\[
\Pr(\mathcal{E}_q^c \mid q) \;\le\; (K_q-2)\exp\!\big(-c\,n\Delta_q^2/p_q\big).
\]
In the regime $\tilde m_q^2 < \log K_q$ this is
$o(n^{-1/2})$ uniformly, and contributes only $o(n^{-1/2})$ to
$\mathbb{E}[\hat c_i - c^\star_q \mid q]$ since
$|\hat c_i - c^\star_q| \le 1$. We may therefore work conditionally on
$\mathcal{E}_q$.

\paragraph{Local CLT on the boundary statistic.}
Define
$V := \hat\pi_N(z^\star_q) - \hat\pi_N(z^{(2)}_q)$. By the bivariate
multinomial CLT,
\[
\sqrt n\,(V - \Delta_q) \;\Rightarrow\; \mathcal{N}(0, \tau_q^2),
\qquad
\tau_q^2 := p_q + p^{(2)}_q - \Delta_q^2 \;=\; p_q\,(1 + o(1))
\]
in low-margin (under the usual normalization where $\sqrt p_q$ in
\eqref{eq:bias-expansion} denotes the per-class boundary scale). Standardize:
$V = \sqrt{p_q/n}(\,\tilde m_q + Y_n)$ with $Y_n \Rightarrow \mathcal{N}(0,1)$
uniformly.

\paragraph{Sem$_1$ bias (Jensen gap).}
On $\mathcal{E}_q$, $\hat c_1 = \max(\hat\pi_N(z^\star_q), \hat\pi_N(z^{(2)}_q))$, hence
\[
\hat c_1 - \hat\pi_N(z^\star_q) \;=\; V^- \;=\; \max(-V, 0).
\]
Since $\mathbb{E}[\hat\pi_N(z^\star_q) \mid q] = c^\star_q$,
\[
\mathbb{E}[\hat c_1 - c^\star_q \mid q]
\;=\; \mathbb{E}[V^-]
\;=\; \sqrt{p_q/n}\;\mathbb{E}\!\big[(\tilde m_q + Y_n)^-\big]
\;\to\; \sqrt{p_q/n}\;\mathbb{E}\!\big[(Y - \tilde m_q)^+\big],
\]
using the symmetry $-Y \stackrel{d}{=} Y$ for $Y \sim \mathcal{N}(0,1)$.
Applying the standard formula $\mathbb{E}[(Y - \mu)^+] = \varphi(\mu) - \mu\Phi(-\mu)$,
\[
\mathbb{E}[\hat c_1 - c^\star_q \mid q]
\;=\; \tfrac{1}{\sqrt n}\sqrt{p_q}\;\big[\varphi(\tilde m_q) - \tilde m_q\Phi(-\tilde m_q)\big] + o(n^{-1/2})
\;=\; \tfrac{1}{\sqrt n}\sqrt{p_q}\,J(\tilde\lambda_q) + o(n^{-1/2}),
\]
since $\tilde m_q = 2\tilde\lambda_q$ and
$J(\tilde\lambda) = \varphi(2\tilde\lambda) - 2\tilde\lambda\Phi(-2\tilde\lambda)$.

\paragraph{Sem$_2$ bias (selection gap).}
By conditional unbiasedness \eqref{eq:c2-cond-unbiased},
$\mathbb{E}[\hat c_2 \mid q, \hat z_N] = \pi_{q,\hat z_N}$. On $\mathcal{E}_q$,
$\pi_{q,\hat z_N}$ takes value $c^\star_q$ on $\{\hat z_N = z^\star_q\}$ and
value $c^\star_q - \Delta_q$ on $\{\hat z_N = z^{(2)}_q\}$. Therefore
\[
\mathbb{E}[\hat c_2 - c^\star_q \mid q, \mathcal{E}_q]
\;=\; -\Delta_q\,\Pr(\hat z_N = z^{(2)}_q \mid q, \mathcal{E}_q)
\;=\; -\Delta_q\,\Pr(V \le 0 \mid q, \mathcal{E}_q).
\]
By the local CLT, $\Pr(V \le 0) \to \Phi(-\tilde m_q)$, and substituting
$\Delta_q = \sqrt{p_q/n}\,\tilde m_q$,
\[
\mathbb{E}[\hat c_2 - c^\star_q \mid q]
\;=\; -\tfrac{1}{\sqrt n}\sqrt{p_q}\,\tilde m_q\Phi(-\tilde m_q) + o(n^{-1/2})
\;=\; -\tfrac{1}{\sqrt n}\sqrt{p_q}\,S(\tilde\lambda_q) + o(n^{-1/2}),
\]
since $S(\tilde\lambda) = 2\tilde\lambda\Phi(-2\tilde\lambda) = \tilde m_q\Phi(-\tilde m_q)$.
\qed

\subsection{Proof of \texorpdfstring{\Cref{thm:gap}}{Corollary 3.3}}\label{app:gap}

Let $A := \mathbb{E}_q[c^\star_q] - \bar{a} > 0$ denote the population over-confidence
gap. By the bias expansions \eqref{eq:bias-expansion},
\[
\mathbb{E}_q[\hat c_1] - \bar{a}
= A + \tfrac{1}{\sqrt n}\,\mathbb{E}_q[\sqrt{p_q}\,J(\tilde\lambda_q)] + o(n^{-1/2}),
\]
\[
\mathbb{E}_q[\hat c_2] - \bar{a}
= A - \tfrac{1}{\sqrt n}\,\mathbb{E}_q[\sqrt{p_q}\,S(\tilde\lambda_q)] + o(n^{-1/2}).
\]
Since $J, S \ge 0$ and $A > 0$, both expressions are positive for $n$
sufficiently large, so the absolute values open in the positive direction:
$\mathrm{ECE}(\hat c_1) = \mathbb{E}_q[\hat c_1] - \bar{a}$ and
$\mathrm{ECE}(\hat c_2) = \mathbb{E}_q[\hat c_2] - \bar{a}$ to leading order.
Subtracting,
\[
\mathrm{ECE}(\hat c_1) - \mathrm{ECE}(\hat c_2)
= \tfrac{1}{\sqrt n}\,\mathbb{E}_q\!\big[\sqrt{p_q}\,(J(\tilde\lambda_q) + S(\tilde\lambda_q))\big] + o(n^{-1/2})
= \tfrac{1}{\sqrt n}\,\mathbb{E}_q[\sqrt{p_q}\,g_A(\tilde\lambda_q)] + o(n^{-1/2}),
\]
which is strictly positive since $g_A(\tilde\lambda) = \varphi(2\tilde\lambda) > 0$
on all of $\mathcal{Q}_{\mathrm{low}}$. \qed

\subsection{Proof of \texorpdfstring{\Cref{thm:sharp}}{Theorem 3.4}}\label{app:sharp}

We compare $|\mathbb{E}_q[\hat c_i] - \mathbb{E}_q[c^\star_q]|$ for $i \in \{1,2\}$
under non-degenerate over-confidence $A := \mathbb{E}_q[c^\star_q] - a \ge c_0/\sqrt n$
(with $c_0 > 0$ fixed) and population support in $\mathcal{Q}_{\mathrm{JDR}}$.

By \eqref{eq:bias-expansion},
\[
\mathbb{E}_q[\hat c_i] - \mathbb{E}_q[c^\star_q]
= \tfrac{(-1)^{i+1}}{\sqrt n}\,\mathbb{E}_q[\sqrt{p_q}\,h_i(\tilde\lambda_q)] + o(n^{-1/2}),
\]
with $h_1 = J$ and $h_2 = S$. Both are $O(n^{-1/2})$, while the
non-degeneracy hypothesis ensures the comparison \emph{against the oracle}
$\mathbb{E}_q[c^\star_q]$ does not flip sign within the $o(n^{-1/2})$ remainder
(this enters when forming
$|\mathbb{E}_q[\hat c_i] - \mathbb{E}_q[c^\star_q]|$ in conjunction with
$\mathrm{ECE}^\star = A$).

Specifically, the absolute oracle distance satisfies
\[
\big|\mathbb{E}_q[\hat c_1] - \mathbb{E}_q[c^\star_q]\big|
= \tfrac{1}{\sqrt n}\,\mathbb{E}_q[\sqrt{p_q}\,J(\tilde\lambda_q)] + o(n^{-1/2}),
\]
\[
\big|\mathbb{E}_q[\hat c_2] - \mathbb{E}_q[c^\star_q]\big|
= \tfrac{1}{\sqrt n}\,\mathbb{E}_q[\sqrt{p_q}\,S(\tilde\lambda_q)] + o(n^{-1/2}),
\]
and subtracting,
\[
\big|\mathbb{E}_q[\hat c_1] - \mathbb{E}_q[c^\star_q]\big|
- \big|\mathbb{E}_q[\hat c_2] - \mathbb{E}_q[c^\star_q]\big|
= \tfrac{1}{\sqrt n}\,\mathbb{E}_q\!\big[\sqrt{p_q}\,(J(\tilde\lambda_q) - S(\tilde\lambda_q))\big] + o(n^{-1/2}).
\]
By definition $g_B = J - S$. On $\mathcal{Q}_{\mathrm{JDR}}$,
$\tilde\lambda_q < \tilde\lambda^\star$, so $g_B(\tilde\lambda_q) > 0$
\eqref{app:lambda-star}. The leading term is therefore strictly positive,
proving \eqref{eq:sharp}. \qed

\subsection{Uniqueness of \(\widetilde\lambda^\star\) via Mills ratio}
\label{app:lambda-star}
We prove that
\[
g_B(\widetilde\lambda)
:=
\phi(2\widetilde\lambda)
-
4\widetilde\lambda\Phi(-2\widetilde\lambda)
\]
has a unique root on \((0,\infty)\). Substitute
\(u:=2\widetilde\lambda\). Then
\[
g_B=0
\Longleftrightarrow
\phi(u)=2u\Phi(-u)
\Longleftrightarrow
r(u)=2u,
\qquad
r(u):=\frac{\phi(u)}{\Phi(-u)}.
\]
Equivalently,
\[
\frac{u}{r(u)}=\frac12.
\]

Let
\[
h(u):=r(u)-2u.
\]
We have \(h(0)=r(0)>0\). By the Mills-ratio asymptotic
\(r(u)=u+u^{-1}+o(u^{-1})\), \(h(u)\to-\infty\) as \(u\to\infty\).
Moreover, the inverse-Mills derivative identity gives
\[
r'(u)=r(u)(r(u)-u),
\]
and the standard bound \(0<r'(u)<1\) for \(u>0\) implies
\[
h'(u)=r'(u)-2<0.
\]
Thus \(h\) is strictly decreasing and has a unique positive root.
Numerically, this root is \(u^\star\approx0.6125\), so
\[
\widetilde\lambda^\star=u^\star/2\approx0.306.
\]

\section{Algorithm}
\label{app:algorithm}

\Cref{alg:semece} summarizes the Sem-ECE framework on a single
question $q$, paralleling the definitions in \Cref{sec:framework}.

\begin{algorithm}[h]
\caption{The Sem-ECE framework on a question \(q\).}
\label{alg:semece}
\begin{algorithmic}[1]
\Require Question \(q\); selection size \(n\); evaluation size \(m\);
semantic clustering oracle \(\mathrm{Cluster}(\cdot)\).
\State Generate \(n+m\) i.i.d.\ responses
\(A_1,\ldots,A_{n+m}\) from the LLM on \(q\).
\State \((Z_1,\ldots,Z_{n+m}) \gets
       \mathrm{Cluster}(A_1,\ldots,A_{n+m})\). \Comment{semantic class
       labels}
\State Partition \([n+m] = N \sqcup E\) with \(|N|=n\), \(|E|=m\).
\State \(\hat z_N \gets
       \argmax_{k\in\mathcal Z_q}\hat\pi_N(k)\).
       \Comment{deployed answer}
\State \(\hat c_1 \gets
       \max_{k\in\mathcal Z_q}\hat\pi_N(k)\).
       \Comment{same-sample confidence \eqref{eq:c1-def}}
\State \(\hat c_2 \gets
       \hat\pi_E(\hat z_N)\).
       \Comment{held-out confidence \eqref{eq:c2-def}}
\State \Return \((\hat z_N,\,\hat c_1,\,\hat c_2)\).
\end{algorithmic}
\end{algorithm}

In experiments (\Cref{sec:experiments}), $\hat c_2$ is computed by
averaging $\hat\pi_E(\hat z_N)$ over $R$ random half-splits of a
pooled sample of size $n + m$ to recycle samples across the two
estimators.

\section{Additional Experimental Figures}
\label{app:additional-figures}

This section reports the full diagnostic figures summarized in the main text.

\subsection{Margin-stratified ECE}

\begin{figure}[p]
\centering
\includegraphics[width=0.95\linewidth]{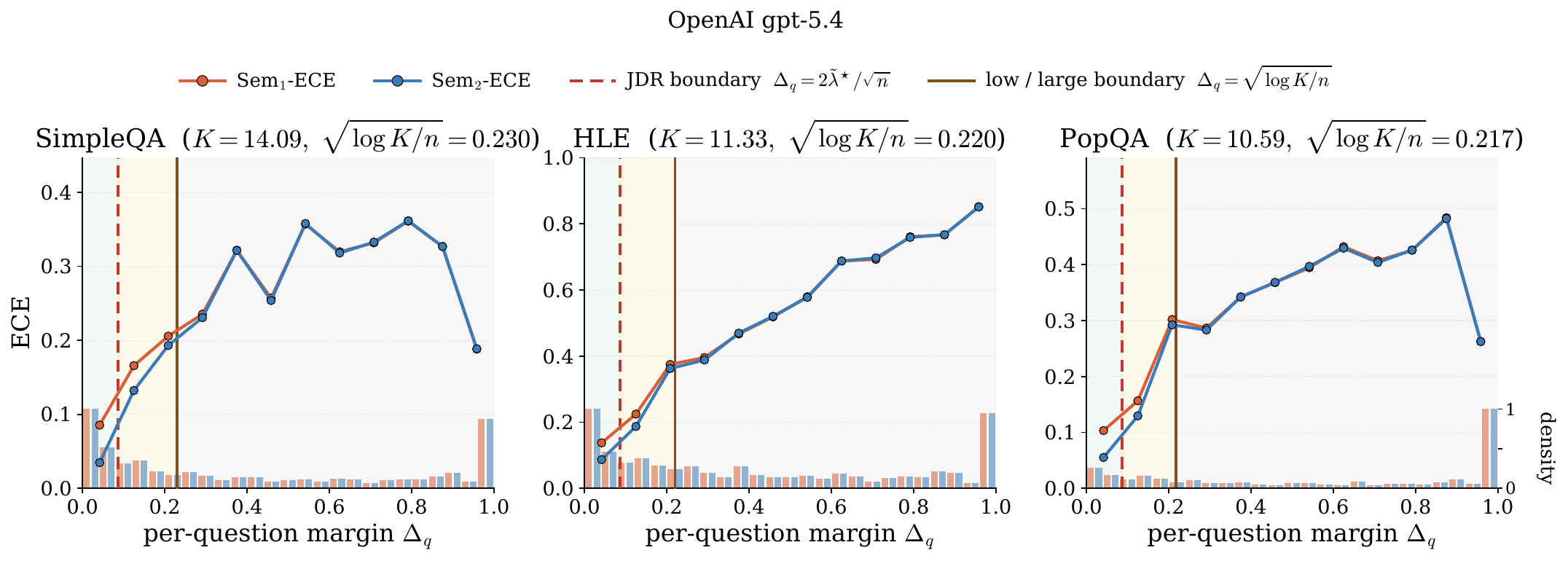}
\caption{
Margin-stratified ECE curves for OpenAI on SimpleQA, HLE, and PopQA.
}
\label{fig:app-ece-margin-openai}
\end{figure}

\begin{figure}[p]
\centering
\includegraphics[width=0.95\linewidth]{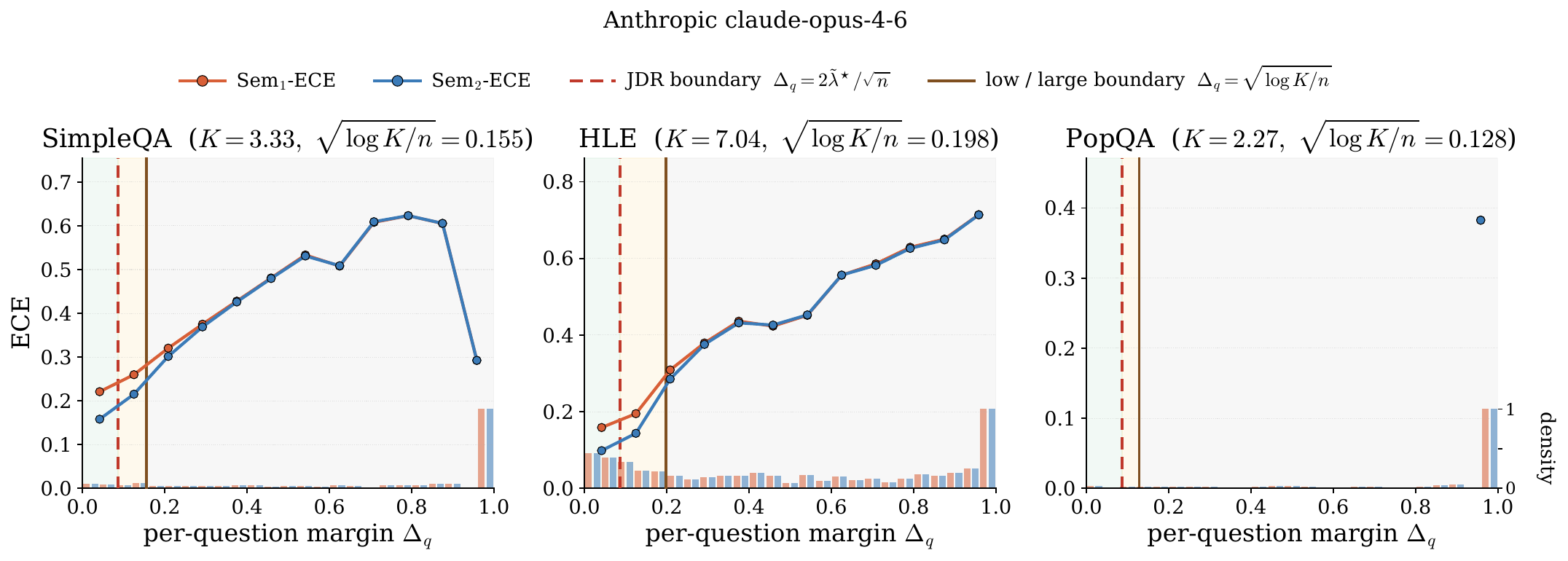}
\caption{
Margin-stratified ECE curves for Anthropic on SimpleQA, HLE, and PopQA.
}
\label{fig:app-ece-margin-anthropic}
\end{figure}

\begin{figure}[p]
\centering
\includegraphics[width=0.95\linewidth]{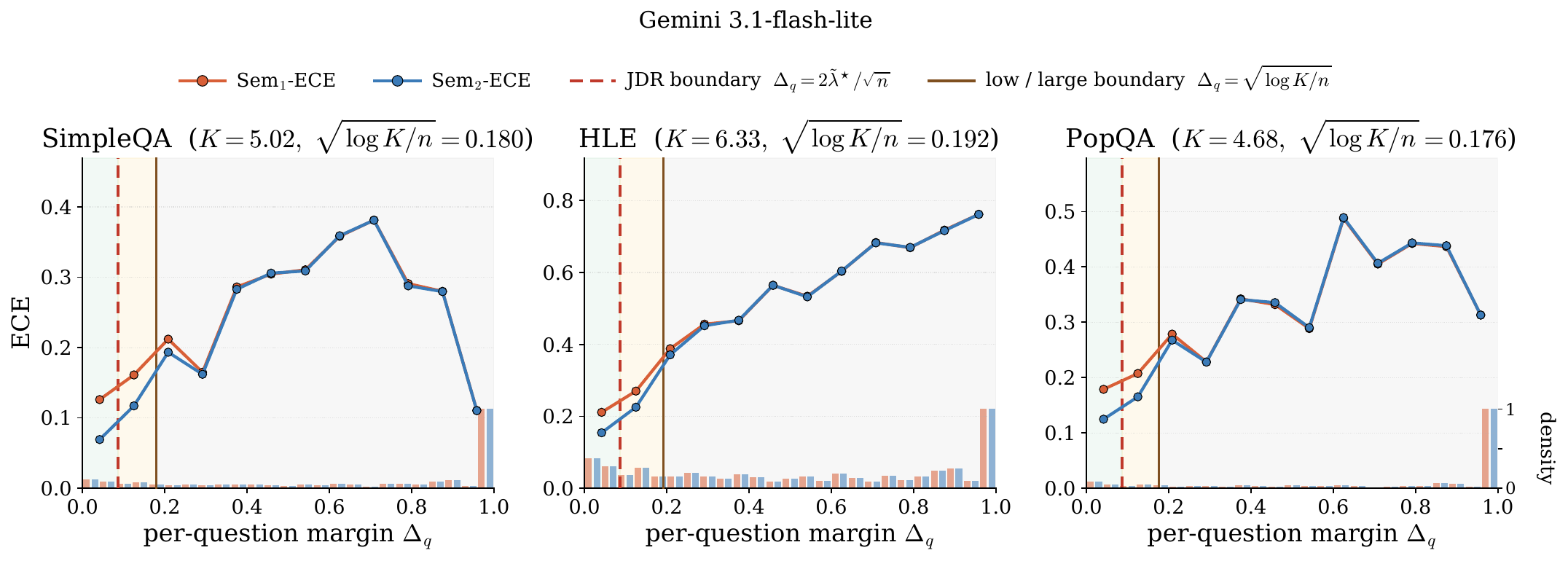}
\caption{
Margin-stratified ECE curves for Gemini on SimpleQA, HLE, and PopQA.
}
\label{fig:app-ece-margin-gemini}
\end{figure}

\begin{figure}[p]
\centering
\includegraphics[width=0.95\linewidth]{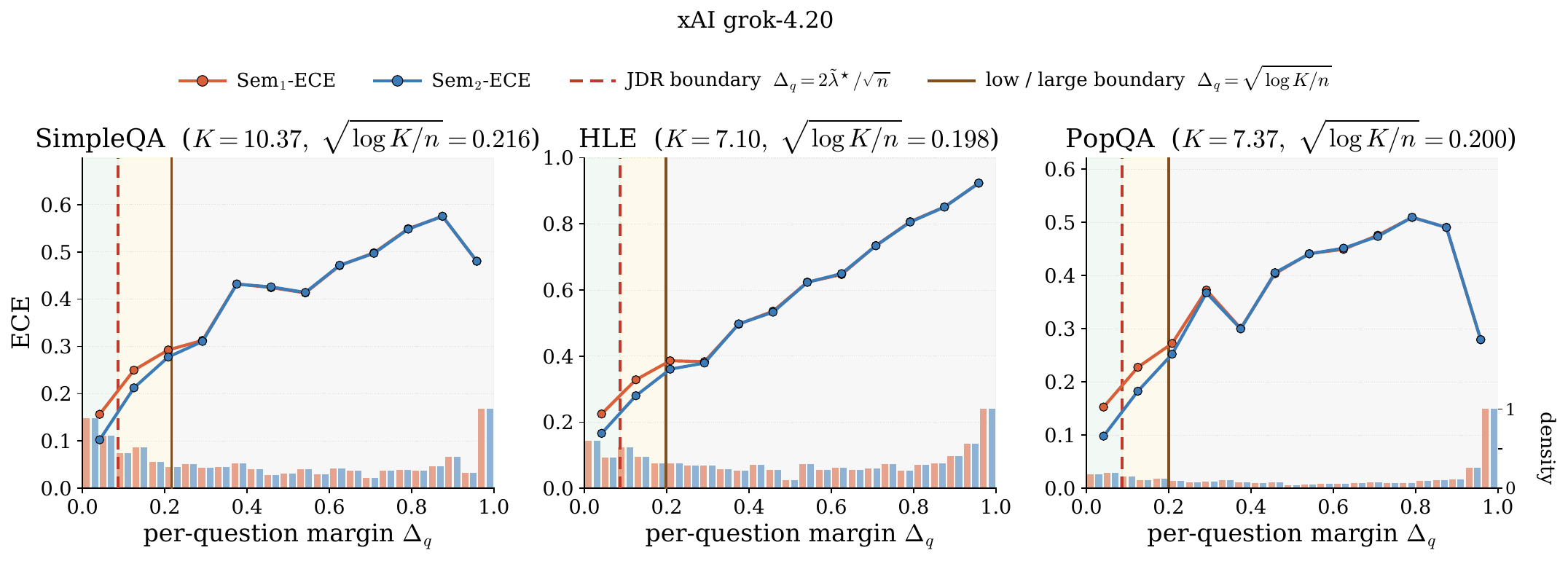}
\caption{
Margin-stratified ECE curves for xAI on SimpleQA, HLE, and PopQA.
}
\label{fig:app-ece-margin-xai}
\end{figure}

\begin{figure}[p]
\centering
\includegraphics[width=0.95\linewidth]{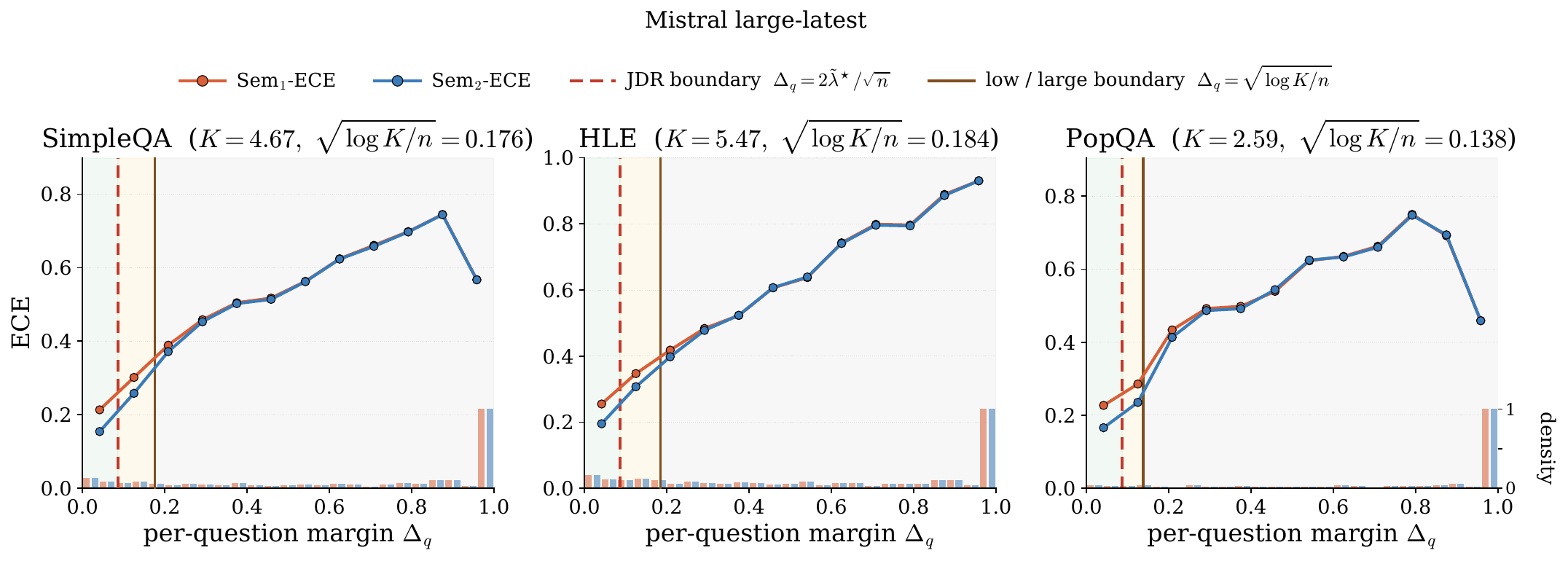}
\caption{
Margin-stratified ECE curves for Mistral on SimpleQA, HLE, and PopQA.
}
\label{fig:app-ece-margin-mistral}
\end{figure}
\subsection{Convergence Rate}
\begin{figure}[t]
  \centering
  \includegraphics[width=\linewidth]{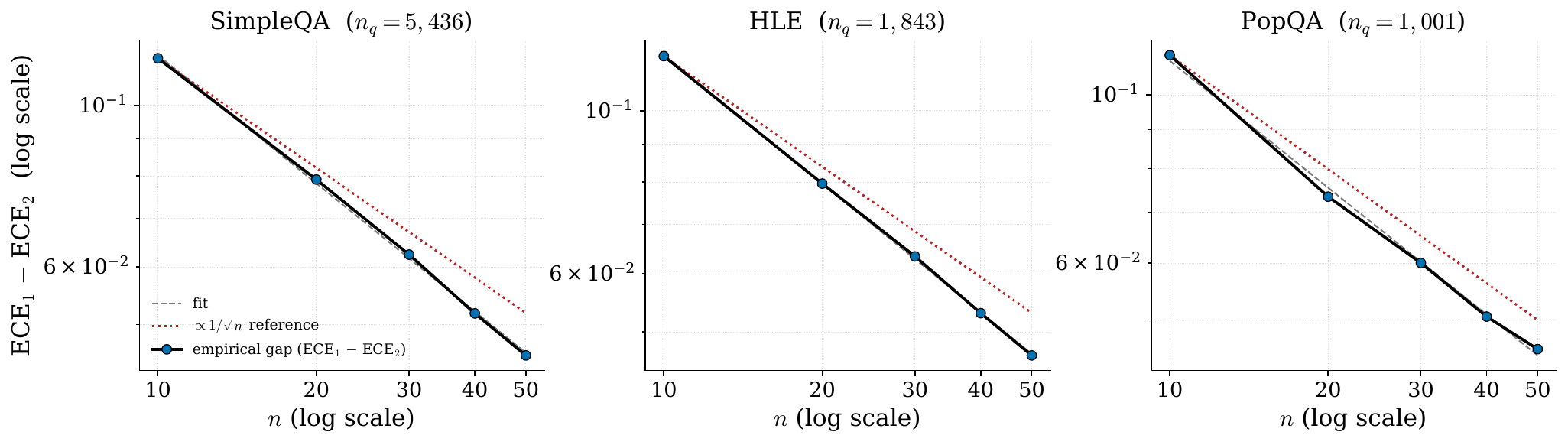}
  \caption{Direct ECE gap
  $\mathrm{Sem}_1\text{-}\mathrm{ECE} - \mathrm{Sem}_2\text{-}\mathrm{ECE}$
  on the low-margin sub-population
  $\{q : \Delta_q < \sqrt{\log K_q/n}\}$ on a log-log scale, threshold
  re-evaluated at each $n$. Fitted slopes $-0.58$, $-0.58$, $-0.56$
  are within $0.08$ of \Cref{thm:gap}'s prediction $-0.50$.}
  \label{fig:rate}
\end{figure}
\subsection{Reliability diagrams}
\begin{figure}[p]
\centering
\includegraphics[width=0.95\linewidth]{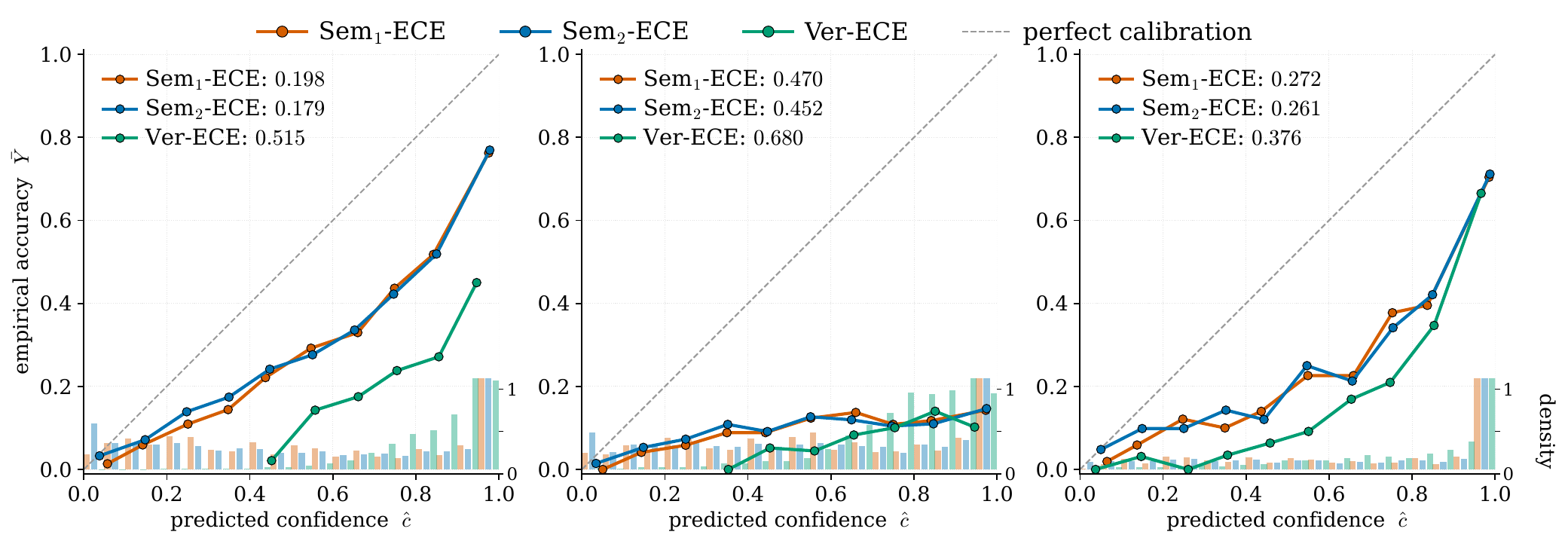}
\caption{
Reliability diagrams for OpenAI on SimpleQA, HLE, and PopQA.
}
\label{fig:app-reliability-openai}
\end{figure}

\begin{figure}[p]
\centering
\includegraphics[width=0.95\linewidth]{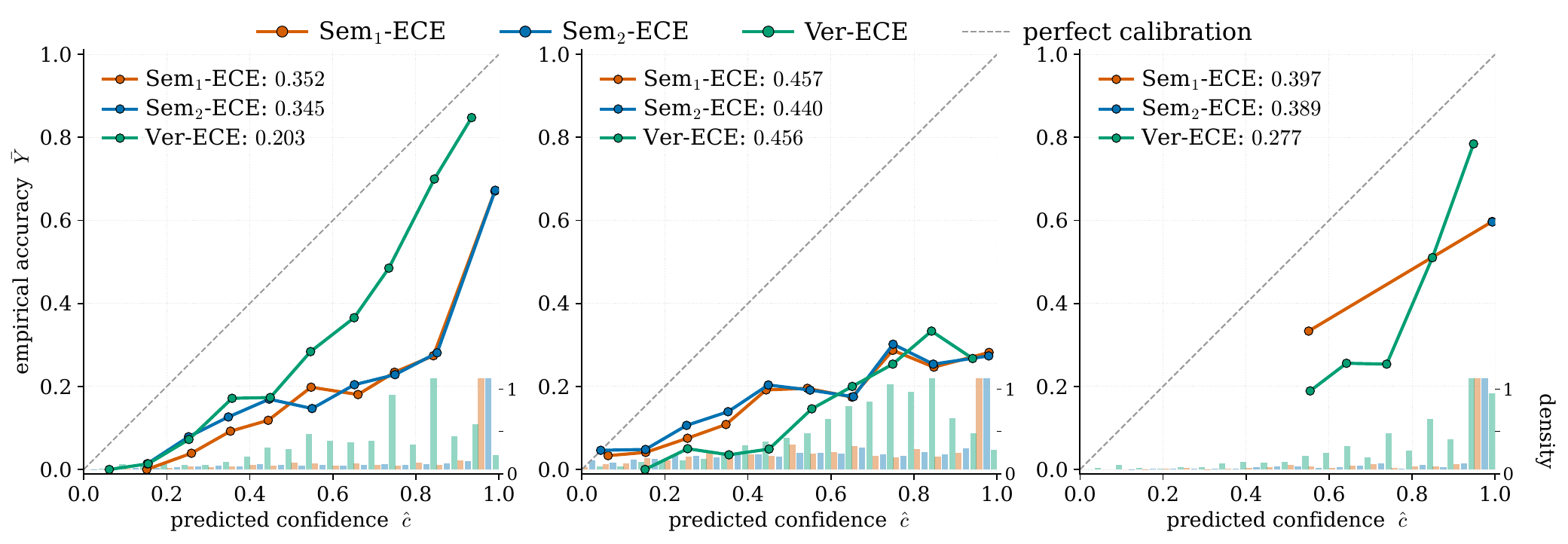}
\caption{
Reliability diagrams for Anthropic on SimpleQA, HLE, and PopQA.
}
\label{fig:app-reliability-anthropic}
\end{figure}

\begin{figure}[p]
\centering
\includegraphics[width=0.95\linewidth]{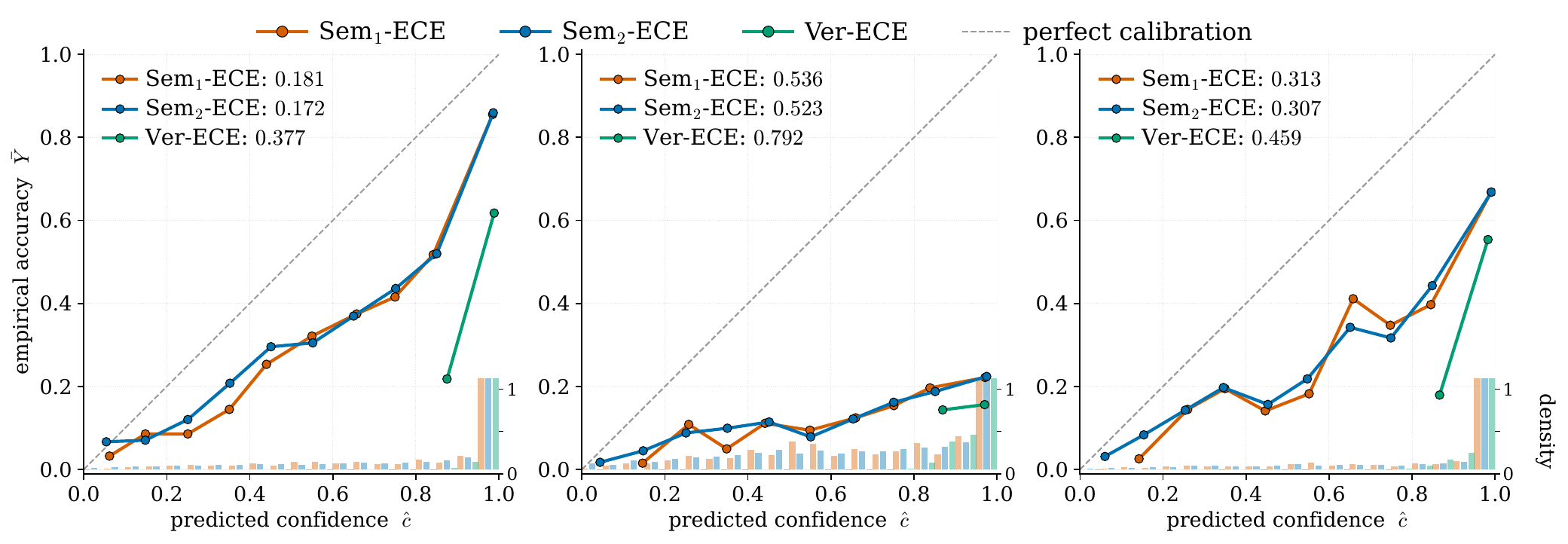}
\caption{
Reliability diagrams for Gemini on SimpleQA, HLE, and PopQA.
}
\label{fig:app-reliability-gemini}
\end{figure}

\begin{figure}[p]
\centering
\includegraphics[width=0.95\linewidth]{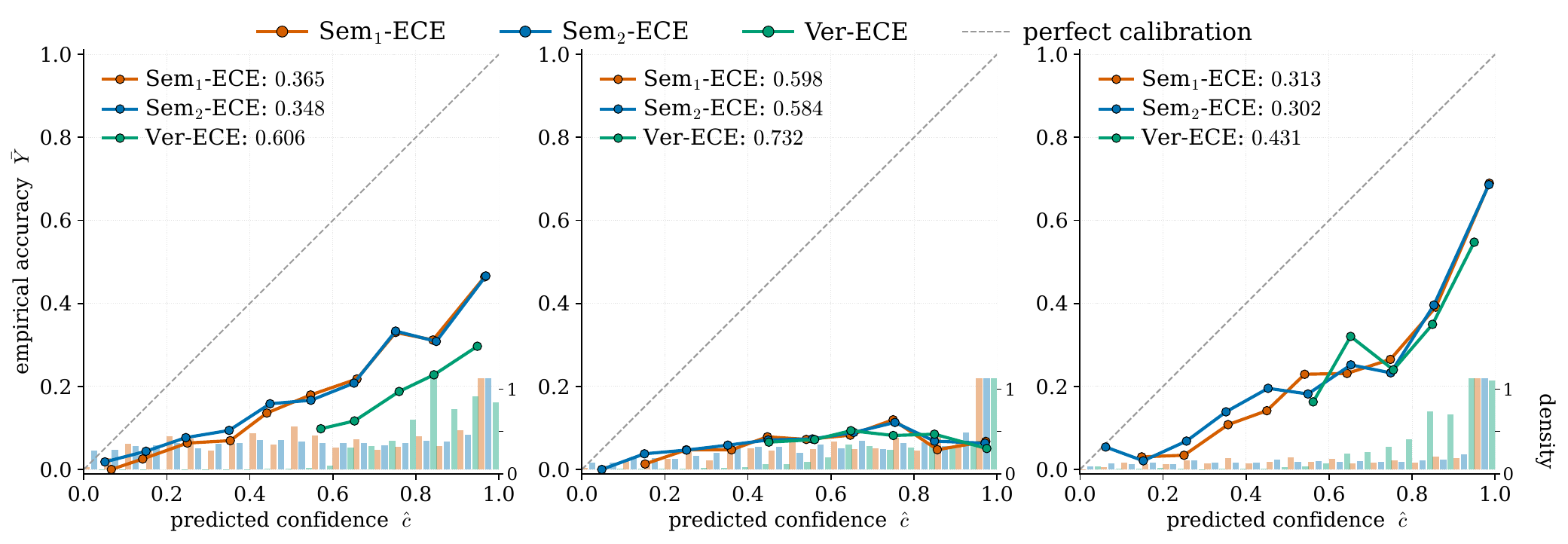}
\caption{
Reliability diagrams for xAI on SimpleQA, HLE, and PopQA.
}
\label{fig:app-reliability-xai}
\end{figure}

\begin{figure}[p]
\centering
\includegraphics[width=0.95\linewidth]{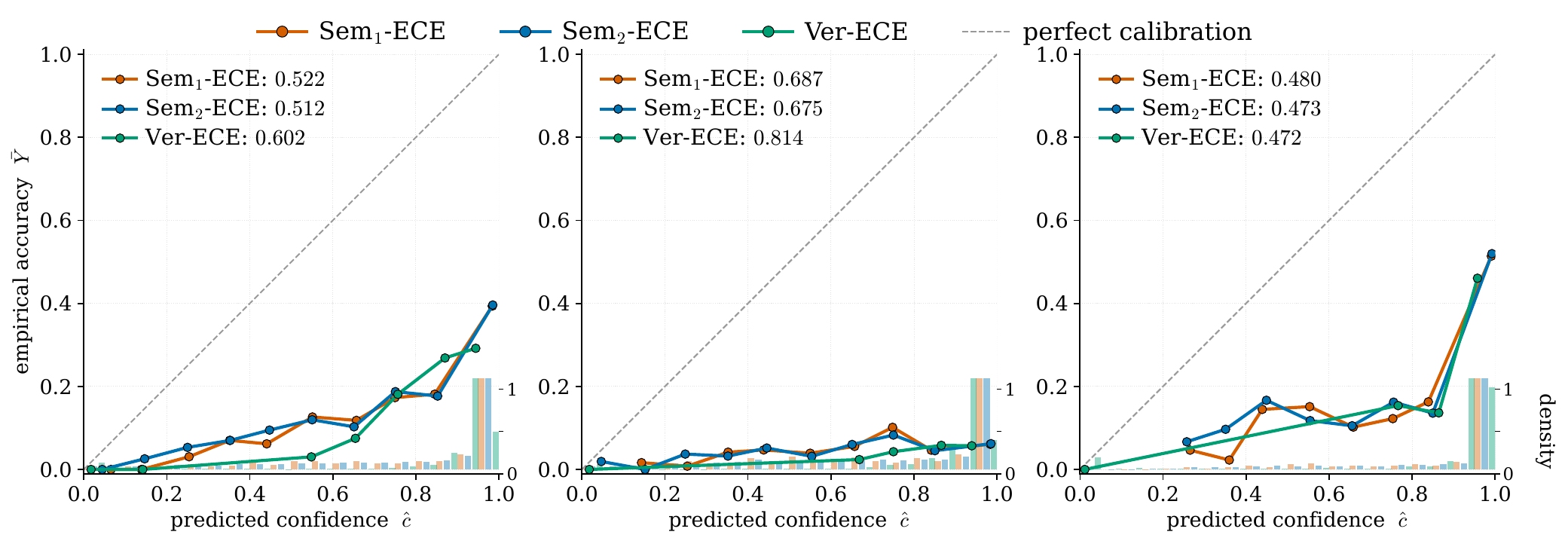}
\caption{
Reliability diagrams for Mistral on SimpleQA, HLE, and PopQA.
}
\label{fig:app-reliability-mistral}
\end{figure}

\section{Boundary alignment numerics}
\label{app:alignment}

\Cref{tab:alignment} reports the per-benchmark numerics that underlie
the leading-constant comparison in \Cref{sec:exp_sharp}. We measure
the empirical $\mathrm{Sem}_1$-ECE $-$ $\mathrm{Sem}_2$-ECE in a
$\pm 10\%$ window around each regime boundary on each pooled
benchmark, and compare to the leading-order prediction
$\varphi(\tilde m^\star)/\sqrt n$ from \Cref{thm:gap} under the
$p_q \to 1$ convention. The JDR boundary
$\Delta_q = 2\tilde\lambda^\star / \sqrt n \approx 0.0865$ at $n = 50$
is universal in $K_q$; the low/large boundary
$\Delta_q = \sqrt{\log K_q / n}$ depends on per-benchmark $K_q$.

\begin{table}[h]
\centering
\caption{Boundary alignment between empirical
$\mathrm{Sem}_1$-ECE $-$ $\mathrm{Sem}_2$-ECE and the leading-order
prediction $\varphi(\tilde m^\star)/\sqrt n$ under $p_q \to 1$, at
the two regime boundaries on each pooled benchmark ($n = 50$).
$n_q$ is the number of questions in the $\pm 10\%$ window.}
\label{tab:alignment}
\setlength{\tabcolsep}{5pt}
\small
\begin{tabular}{llcccccc}
\toprule
Dataset & $K_q$ & Band & $\Delta_q$ at bdy. & $n_q$
        & gap$_{\mathrm{emp}}$ & gap$_{\mathrm{theory}}$
        & emp/theory \\
\midrule
\multirow{2}{*}{SimpleQA} & \multirow{2}{*}{$7.48$}
  & JDR       & $0.0865$ & $510$ & $0.0526$ & $0.0468$ & $1.12$ \\
& & low/large & $0.2006$ & $597$ & $0.0151$ & $0.0206$ & $0.73$ \\
\midrule
\multirow{2}{*}{HLE} & \multirow{2}{*}{$7.49$}
  & JDR       & $0.0865$ & $276$ & $0.0545$ & $0.0468$ & $1.17$ \\
& & low/large & $0.2007$ & $360$ & $0.0183$ & $0.0206$ & $0.89$ \\
\midrule
\multirow{2}{*}{PopQA} & \multirow{2}{*}{$6.05$}
  & JDR       & $0.0865$ & $146$ & $0.0527$ & $0.0468$ & $1.13$ \\
& & low/large & $0.1897$ & $210$ & $0.0183$ & $0.0229$ & $0.80$ \\
\bottomrule
\end{tabular}
\end{table}

The leading-order prediction recovers the empirical gap to within
$11$--$27\%$ on every benchmark with no fitted constants. The same
direction of error appears across all three benchmarks: the JDR
boundary over-shoots and the low/large boundary under-shoots. This
sign-consistent residual, together with the steeper-than-$-0.50$
log-log slope in \Cref{fig:n-sweep}(b), is consistent with a
subleading $O(1/n)$ Edgeworth correction that flips sign as
$\tilde\lambda$ moves from $\tilde\lambda^\star \approx 0.306$ to
$\sqrt{\log K_q}/2$. A refined alignment that replaces the $p_q \to 1$
convention with band-wise $\hat p_q$ averages does not improve the
fit, ruling out the convention as the source of the residual.

\section{Bootstrap details}
\label{app:bootstrap}

For each model--benchmark cell we compute paired percentile bootstrap
$95\%$ CIs on three statistics, with $B = 1000$ replicates resampled
at the per-question level (questions are the i.i.d. unit; pairing
between $\hat c_1$ and $\hat c_2$ is preserved within each
resample):
\begin{itemize}[topsep=2pt,itemsep=1pt,leftmargin=*]
\item $\Delta\mathbb{E}[\hat c_1 - \hat c_2]$, the mean
per-question confidence reduction (positive $\Rightarrow$
$\mathrm{Sem}_2$ lowers confidence below $\mathrm{Sem}_1$);
\item $\Delta\mathrm{ECE} := \mathrm{Sem}_1\text{-ECE} - \mathrm{Sem}_2\text{-ECE}$
on all questions in the cell;
\item $\Delta\mathrm{ECE}_{\mathrm{low}}$, the same gap restricted to
the low-margin sub-population $\{q : \Delta_q < 1/\sqrt n\}$
(equivalent to $\tilde m_q < 1$ under the $p_q \to 1$ convention,
$n = 50$).
\end{itemize}
PopQA cells use the $466$-question intersection across all five
providers to enable paired comparison; this restriction reduces the
PopQA per-cell $N$ relative to \Cref{tab:per_pair_ece}, which uses
each provider's full coverage.

\paragraph{Summary.}
$\Delta\mathbb{E}[\hat c_1 - \hat c_2]$ is significantly positive on
all $15$ pairs --- the per-question confidence reduction predicted by
\eqref{eq:bias-expansion} holds without exception. $\Delta\mathrm{ECE}$
is significantly positive on $11$ of $15$ pairs; the remaining four
are PopQA cells (OpenAI, Anthropic, Gemini, xAI) whose $466$-Q
intersection sample is small enough that the population-level ECE
gap, while consistently directionally positive, does not always
exclude zero. On the theory-relevant low-margin sub-population
$\{q : \Delta_q < 1/\sqrt n\}$, $\Delta\mathrm{ECE}_{\mathrm{low}}$
is significantly positive in $11$ of $14$ measurable cells (PopQA
Anthropic has only $n_q = 26$ low-margin questions, insufficient for
a stable bootstrap CI). Details shown in \Cref{tab:bootstrap_dE}, \Cref{tab:bootstrap_dECE}, and \Cref{tab:bootstrap_dECE_low}.

\begin{table}[h]
\centering
\caption{Per-question confidence reduction
$\Delta\mathbb{E}[\hat c_1 - \hat c_2]$ with paired percentile
bootstrap $95\%$ CI ($B = 1000$, resampled per question). Cells whose
CI excludes zero are bolded. PopQA uses the $466$-Q intersection.}
\label{tab:bootstrap_dE}
\setlength{\tabcolsep}{5pt}
\small
\begin{tabular}{l|lll}
\toprule
& OpenAI & Anthropic & Gemini \\
\midrule
SimpleQA  & \textbf{$0.0197\ [0.0187,0.0206]$}
          & \textbf{$0.0074\ [0.0067,0.0081]$}
          & \textbf{$0.0089\ [0.0082,0.0097]$} \\
HLE       & \textbf{$0.0175\ [0.0160,0.0188]$}
          & \textbf{$0.0172\ [0.0155,0.0189]$}
          & \textbf{$0.0135\ [0.0123,0.0148]$} \\
PopQA     & \textbf{$0.0095\ [0.0073,0.0119]$}
          & \textbf{$0.0038\ [0.0023,0.0056]$}
          & \textbf{$0.0059\ [0.0040,0.0079]$} \\
\midrule
& xAI & Mistral & \\
\midrule
SimpleQA  & \textbf{$0.0161\ [0.0152,0.0170]$}
          & \textbf{$0.0101\ [0.0093,0.0108]$} & \\
HLE       & \textbf{$0.0144\ [0.0130,0.0157]$}
          & \textbf{$0.0119\ [0.0106,0.0130]$} & \\
PopQA     & \textbf{$0.0075\ [0.0053,0.0099]$}
          & \textbf{$0.0048\ [0.0031,0.0069]$} & \\
\bottomrule
\end{tabular}
\end{table}

\begin{table}[h]
\centering
\caption{Population ECE gap $\Delta\mathrm{ECE} =
\mathrm{Sem}_1\text{-ECE} - \mathrm{Sem}_2\text{-ECE}$ with $95\%$
CI. Cells whose CI excludes zero are bolded.}
\label{tab:bootstrap_dECE}
\setlength{\tabcolsep}{5pt}
\small
\begin{tabular}{l|lll}
\toprule
& OpenAI & Anthropic & Gemini \\
\midrule
SimpleQA  & \textbf{$0.0197\ [0.0170,0.0205]$}
          & \textbf{$0.0074\ [0.0067,0.0081]$}
          & \textbf{$0.0084\ [0.0063,0.0095]$} \\
HLE       & \textbf{$0.0175\ [0.0160,0.0188]$}
          & \textbf{$0.0171\ [0.0127,0.0187]$}
          & \textbf{$0.0135\ [0.0120,0.0148]$} \\
PopQA     & $0.0077\ [-0.0034,0.0123]$
          & $0.0078\ [-0.0012,0.0121]$
          & $0.0058\ [-0.0032,0.0127]$ \\
\midrule
& xAI & Mistral & \\
\midrule
SimpleQA  & \textbf{$0.0161\ [0.0152,0.0170]$}
          & \textbf{$0.0101\ [0.0093,0.0108]$} & \\
HLE       & \textbf{$0.0144\ [0.0130,0.0157]$}
          & \textbf{$0.0119\ [0.0105,0.0130]$} & \\
PopQA     & $0.0065\ [-0.0020,0.0093]$
          & \textbf{$0.0048\ [0.0015,0.0069]$} & \\
\bottomrule
\end{tabular}
\end{table}

\begin{table}[h]
\centering
\caption{Low-margin ECE gap $\Delta\mathrm{ECE}_{\mathrm{low}}$ on
$\{q : \Delta_q < 1/\sqrt{50}\}$ with $95\%$ CI. Cells bolded if CI
excludes zero. n/a indicates insufficient sample size.}
\label{tab:bootstrap_dECE_low}
\setlength{\tabcolsep}{5pt}
\small
\begin{tabular}{l|lll}
\toprule
& OpenAI & Anthropic & Gemini \\
\midrule
SimpleQA  & \textbf{$0.0476\ [0.0394,0.0485]$}
          & \textbf{$0.0548\ [0.0432,0.0587]$}
          & \textbf{$0.0498\ [0.0317,0.0552]$} \\
HLE       & \textbf{$0.0477\ [0.0438,0.0497]$}
          & \textbf{$0.0559\ [0.0319,0.0605]$}
          & \textbf{$0.0550\ [0.0504,0.0572]$} \\
PopQA     & $0.0274\ [-0.0413,0.0616]$
          & n/a
          & $0.0380\ [-0.0506,0.1131]$ \\
\midrule
& xAI & Mistral & \\
\midrule
SimpleQA  & \textbf{$0.0495\ [0.0482,0.0509]$}
          & \textbf{$0.0552\ [0.0531,0.0572]$} & \\
HLE       & \textbf{$0.0563\ [0.0536,0.0587]$}
          & \textbf{$0.0548\ [0.0494,0.0576]$} & \\
PopQA     & $0.0410\ [-0.0152,0.0793]$
          & \textbf{$0.0585\ [0.0106,0.0695]$} & \\
\bottomrule
\end{tabular}
\end{table}

\paragraph{Interpretation.}
The per-question reduction
$\Delta\mathbb{E}[\hat c_1 - \hat c_2]$ is the most directly
empirically observable consequence of \eqref{eq:bias-expansion}, and
its CI excludes zero on every cell --- including all PopQA cells
despite their reduced effective sample size. The population-level
$\Delta\mathrm{ECE}$ has a less stable CI because binning
discretization dampens the signal; this is most visible on PopQA at
$N = 466$. Restricting to the low-margin sub-population recovers a
larger and more stable signal exactly where \Cref{thm:gap} predicts
the gap to be largest, with effect sizes ($\sim 5$ percentage points)
roughly $4 \times$ those on the full population, which is consistent with
the regime structure visible in \Cref{fig:ece-margin}.

\section{Pooled reliability analysis}
\label{app:reliability}

\Cref{fig:reliability} shows reliability diagrams pooled across
models on each benchmark. $\text{Sem}_2$ achieves the lowest pooled
ECE on every benchmark (SimpleQA $0.311$, HLE $0.542$, PopQA $0.334$,
versus $\text{Sem}_1$ $0.323/0.556/0.340$ and Ver
$0.458/0.690/0.382$). All three sources are over-confident at high
confidence on HLE---expected for an expert-level benchmark where
models are highly self-consistent yet factually wrong, so semantic
agreement (which Sem-ECE measures) and factual correctness diverge.
The remaining ECE in $\text{Sem}_2$ reflects this population gap
$|\mathbb{E}_q[c_q^\star] - \bar a|$, a property of the underlying
model rather than of the calibration estimator. Combining a debiased
agreement metric like $\text{Sem}_2$-ECE with a separate signal
targeting $\bar a$ directly is a natural direction for future work.

\section{Extended Related Work}
\label{app:related}

\textbf{Calibration evaluation.}
Calibration evaluation is well studied for probabilistic classifiers, where a
model predicts a probability distribution over a fixed label set and metrics
such as Brier score, reliability diagrams, and expected calibration error compare
confidence with empirical accuracy
\citep{brier1950verification,naeini2015obtaining,guo2017calibration}.
This setting naturally extends to multiple-choice QA, where confidence can be
computed from logits or normalized option probabilities. Open-ended QA is less
straightforward: the answer space is not fixed, logits may be unavailable, and
correctness is semantic rather than lexical. As a result, calibration evaluation
for open-ended QA also requires a correctness judgment for free-form answers,
typically through human annotation or a validated automatic judge.

\textbf{Confidence estimation for open-ended language models.}
Several approaches estimate confidence without relying on a fixed label space.
Verbalized confidence asks the model to state its own uncertainty in words or as
a probability
\citep{lin2022teaching,kadavath2022language,mielke2022reducing,tian2023just}.
This is flexible with respect to answer format, but depends on the model's
self-reporting behavior and can be inaccurate or over-confident. Sampling-based
methods instead query the model multiple times and use agreement across
generations as a confidence signal
\citep{wang2023selfconsistency,lyu2025calibrating}. Related semantic uncertainty
methods group generations by meaning and aggregate uncertainty over semantic
clusters
\citep{kuhn2023semantic,farquhar2024detecting}. Existing sampling-based
calibration methods often still rely on task-specific answer extraction, such as
fixed final-answer patterns or regular expressions, before agreement can be
computed.

\textbf{Improving calibration.}
A separate line of work aims to improve model calibration rather than evaluate
it. Post-hoc methods such as Platt scaling, temperature scaling, Dirichlet
calibration, and verified calibration adjust confidence scores on held-out
validation data without changing the model
\citep{platt1999probabilistic,guo2017calibration,kull2019beyond,
kumar2019verified}. For language models, calibration can also be improved or
elicited through prompting, auxiliary confidence estimation, or verbalized
uncertainty
\citep{lin2022teaching,kadavath2022language,tian2023just}. Fine-tuning-based
approaches modify the model or training objective to preserve or restore
calibration after alignment
\citep{xiao2025restoring}.

\clearpage
\newpage

\end{document}